\def\eqref#1{equation~\ref{#1}}
\def\1{\bm{1}}
\DeclareMathAlphabet{\mathsfit}{\encodingdefault}{\sfdefault}{m}{sl}
\SetMathAlphabet{\mathsfit}{bold}{\encodingdefault}{\sfdefault}{bx}{n}
\newtheorem{theorem}{Theorem}
\newtheorem*{theorem*}{Theorem}
\newtheorem{lemma}{Lemma}
\newcommand{\fullp}{\texttt{FP32}}
\newcommand{\halfp}{\texttt{FP16}}
\newcommand{\bhalfp}{\texttt{BF16}}
\newcommand{\mambablock}{\texttt{MambaBlock}}
\newcommand{\reals}{\mathbb{R}}
\newcommand{\plusreals}{\mathbb{R}_{\geq 0}}
\newcommand{\negreals}{\mathbb{R}_{\leq 0}}
\newcommand{\Linear}[1]{\texttt{Linear}(#1)}
\newcommand{\diag}[1]{\texttt{diag}(#1)}
 \newcommand{\bigo}[1]{\mathcal{O}(#1)}
\newcommand\norm[1]{\left\lVert#1\right\rVert_2}
\newcommand{\mle}{\lambda_{\mathtt{max}}}
\newcommand{\sll}{{\tt SLL LoRA}}
\newcommand{\all}{{\tt ALL LoRA}}
\newcommand{\BoldA}				{ \mathbf{A} }
\newcommand{\BoldB}				{ \mathbf{B} }
\newcommand{\BoldC}				{ \mathbf{C} }
\newcommand{\Boldu}				{ \mathbf{u} }
\newcommand{\Boldx}				{ \bm{x} }
\newcommand{\Boldy}				{ \mathbf{y} }
\newcommand{\I}					{ \mathbf{I} }
\newcommand{\BoldDelta}			{ \boldsymbol{\Delta} }
\newcommand{\BoldP}				{ \mathbf{P} }
\newcommand{\BoldLambda}				{ \mathbf{\Lambda} }
\title{Mamba State-Space Models Are Lyapunov-Stable Learners}
\author{\name John T. Halloran \email halloranjt@leidos.com \\\addr Leidos \\
  \AND
  Manbir Gulati \\\addr Leidos\\
  \AND
  Paul Roysdon\\
  \addr Leidos
}
\begin{document}

\maketitle
\begin{abstract}
  Mamba state-space models (SSMs) have recently outperformed state-of-the-art (SOTA) Transformer large language models (LLMs) in various tasks and been widely adapted.  However, a major concern for stable learning in recurrent-based deep models (such as SSMs) is the sensitivy of their recurrent dynamics.  Despite widespread adaptation, the sensitivity of Mamba's recurrent dynamics under common fine-tuning methods--e.g., mixed-precision fine-tuning (MPFT) and parameter-efficient fine-tuning (PEFT)--remains unexplored.  Empirically, we show that Mamba LLMs are extremely stable to changes introduced by combinations of MPFT and PEFT, in stark contrast to Transformer LLMs, which we demonstrate may drastically diverge from their respective full-precision counterparts under different combinations of MPFT and PEFT (despite the near-ubiquitous adaptation of these fine-tuning frameworks for attention-based models).  The demonstrated robustness of Mamba LLMs are due to their recurrent dynamics, which we prove are guaranteed to be stable using dynamical systems theory (in particular, Lyapunov stability).  We conclude by using MPFT and PEFT to novelly study Mamba LLMs' in-context learning (ICL) abilities on \emph{natural language tasks}, thus supplementing other recent work.
\end{abstract}

\section{Introduction}
Mamba\citep{gu2023mamba} state-space models (SSMs)  have been shown to greatly outperform comparable attention-based LLMs~\citep{biderman2023pythia} across a large number of standard natural language benchmarks.  Subsequently, pretrained Mamba models have been widely adapted across a large number of data modalities~\citep{liu2024vmamba,li2024spmamba, quan2024multichannel, li2024stg}, tasks~\citep{xie2024promamba, wang2024graph}, and architectures ~\citep{anthony2024blackmamba, park2024can, lieber2024jamba}.  However, despite such widespread adaptation and subsequent research threads~\citep{daotransformers, park2024can, wangmamba}, no existing work has sought to understand the stability of learning Mamba SSMs in common LLM fine-tuning frameworks, such as mixed-precision fine-tuning (MPFT).
For recurrent-based deep models, such as Mamba SSMs, the sensitivity of recurrent dynamics to small changes (e.g., those introduced by mixed-precision) is a primary concern for stable learning~\citep{bengio1994learning, pascanu2013difficulty}.

To determine the stability of time-varying Mamba models, we leverage theory from dynamical systems.  Under the framework of Lyapunov stability, 
we show that small input changes within the SSM layer of both Mamba and the more recent Mamba-2~\citep{daotransformers} models do not lead to exponentially deviating outputs.
Furthermore, we extend these theoretical results to account for deviations in input and latent Mamba states produced by combinations of MPFT and PEFT.

We empirically validate these theoretical results; for a large number of randomly generated SSM layers, we show that manually adjusting initial latent and input states produces maximum deviations in the output states which exponentially decrease over discrete time.  Furthermore, by expanding previous divergence performance metrics~\citep{dettmers2022gpt3, dettmers2023case, dettmers2024qlora} and evaluating combinations of MPFT and PEFT, we show that fine-tuned Mamba LLMs do not substantially deviate in performance compared to full-precision full fine-tuning.  This is demonstrated in stark contrast to comparable Transformer-based LLMs, which we show may drastically differ from their full-precision full fine-tuning counterparts, exhibiting  \emph{large deviation spikes}.  \textbf{Over two widely used fine-tuning datasets and two widely used natural language benchmarks,} \textbf{attention-based LLMs Pythia}~\citep{biderman2023pythia} \textbf{and OpenELM}~\citep{mehta2024openelm} \textbf{exhibit 9 and 4 large deviation spikes, respectively, whereas Mamba LLMs exhibit zero}.  Thus, Mamba LLMs are significantly more stable to performance deviations under MPFT and PEFT, whereas Transformer LLMs are susceptible to large deviation spikes.

Lastly, we use stable MPFT and PEFT to explore the ICL capabilities of instruction tuned Mamba and Mamba-2 models on \emph{natural language tasks}, thus complementing recent studies which evaluated Mamba ICL on synthetic tasks~\citep{park2024can, leeattention}.  While the ICL capabilities of pretrained Mamba models lag behind those of comparable Transformer models--Mamba and Mamba-2 models only achieve 38\% and 82\%, respectively, of the performance improvements (relative to zero-shot) of Pythia models--instruction tuned Mamba and Mamba-2 models improve to 81.5\% and 132\% of the ICL improvements achieved by Pythia.  Thus, while pretrained Mamba models' ICL capabilities lag behind comparable Transformers, we show that instruction-tuning using MPFT and PEFT greatly narrows this gap.

Our major contributions are summarized as follows:
\begin{itemize}
\item We derive bounds on the Lyapunov exponents of both Mamba and Mamba-2 models' SSM equations.  Using these bounds, we theoretically show that small input changes within the \mambablock{} do not lead to exponentially deviating outputs.
\item Building on the aforementioned bounds, we theoretically show that changes to \mambablock{} latent and input states stemming from MPFT and PEFT do not lead to exponentially deviating outputs.
\item We empirically demonstrate the above theoretical results:
  \begin{itemize}
  \item Directly introducing $\varepsilon$ changes to the inputs states of 100 random \mambablock{}, we confirm that the resulting output states do not deviate from the unperturbed outputs over discrete time (the maximum deviation exponentially decreases).
  \item For MPFT and PEFT, we measure the performance deviation between \fullp{} full fine-tuned and MPFT/LoRA Mamba models and compare to attention-based models.  Across two fine-tuning datasets, two widely used natural language benchmarks, several model sizes, and a large number of MPFT/PEFT configurations, we show that training Mamba LLMs is significantly more stable than comparable Transformer LLMs.
  \end{itemize}
\item We expand performance divergence measures previously used to study MPFT and PEFT for Transformer LLMs~\citep{dettmers2022gpt3, dettmers2023case, dettmers2024qlora}.  In addition to showing Mamba models are drastically more stable than comparable attention-based LLMs, the expanded divergence metric also demonstrates that Transformer LLMs are susceptible to large deviation spikes.
\item Using stable MPFT and PEFT, we complement recent studies by examining the ICL capabilities of Mamba/Mamba-2 models evaluated on \emph{natural language tasks}.  We show that instruction tuning allows SSMs to perform ICL competitively with comparable Transformer LLMs on natural language tasks.
\end{itemize}

{\bf Terminology}.  When describing a particular foundation model or result, we use the term ``Mamba model'' to refer to one of the original models released in \citet{gu2023mamba} and ``Mamba-2 model'' to refer to models released in \citet{daotransformers}.  Despite subtle architectural differences, these two SSMs share the same state-space equations and design scheme of storing the majority of SSM parameters in a large memory buffer.  We thus synonymously use the term \mambablock{} to refer to the SSM layer of both Mamba and Mamba-2 models.

\section{Background}
{\bf Mixed-precision and parameter-efficient fine-tuning}.  In the current era of extremely large foundation models, both MPFT and PEFT have become ubiquitous tools for the rapid adaptation of Transformer-based LLMs towards specific applications.  PEFT using adapters~\citep{he2021towards} allows a large pretrained model to be efficiently adapted for a particular downstream task by freezing the full model and training only a small number of extra parameters.  Arguably the most widely used such PEFT method is LoRA~\citep{hu2021lora}, which injects trainable low-rank matrices into Transformer layers to approximate weight updates.

To further decrease the computational demands necessary for LLM fine-tuning and inference, MPFT via mixed-precision (i.e., \halfp{} or \bhalfp{})~\citep{kalamkar2019study,micikevicius2018mixed} and quantized low-precision~\citep{dettmers2024qlora} have proven effective strategies to reduce GPU memory and runtime requirements without deleterious effects on downstream performance~\citep{dettmers2024qlora, wu2020integer}.  Additionally, mixed-precision approaches have paved the way for hardware-aware optimizations within the self-attention module~\citep{dao2022flashattention}, greatly mitigating the quadratic complexity of Transformer LLMs.  Together, PEFT and MPFT have created a rich ecosystem with which varying combinations of these approaches may be used to meet the computational constraints of a given training system.  

{\bf State-space Models}.  \emph{Structured state-space sequence} (S4) models~\citep{gu2022efficiently, fu2023hungry} are SSMs which leverage 
linear time-invariant (LTI) systems to combine the computational advantages of Transformers--i.e., highly parallelizable training--and recurrent neural networks (RNNs)--i.e., subquadratic autoregressive inference using recurrency.  Within the S4 layer, an input signal is discretized and LTI parameters representing the input's latent dynamics are learned.  Owing to the S4 block's latent dynamics being LTI, the S4 block's output may be thus compactly represented as a single convolution between the input and an \emph{SSM convolution kernel} (a matrix whose entries are products of LTI learnable parameters resulting from unrolling the state-space equations).  However, despite hardware efficiency and long-dependency-modeling improvements, LTI-based S4 models remained inferior to Transformers of comparable parameter-sizes for natural language tasks, even when augmenting S4 layers with attention-layers for hybrid architectures~\citep{gu2023mamba}.

Innovating on these previous S4 approaches, Mamba utilizes time-\emph{varying} parameters to model latent dynamics, thus broadening the ability to capture nuanced changes evolving in discrete-time.  Without LTI dynamics, however, the input-output representation via the SSM convolution kernel is no longer applicable, thus voiding previous hardware-aware S4 optimizations~\citep{fu2023hungry}.  To enable hardware efficiency with time-varying SSM parameters, \citep{gu2023mamba} thus introduced extensively customized CUDA kernels which implement highly parallelized prefix sums to compute recurrent states.  Subsequently, \citet{daotransformers} considered the unrolled state-space equations and leveraged tensor contractions (i.e., einsum notation~\citep{rogozhnikov2021einops}) to efficiently calculate Mamba variables.  The resulting Mamba-2 foundation models contained significantly larger latent-variable dimensions than the Mamba models of \citep{gu2023mamba}, while maintaining efficiency on modern GPU accelerators.

{\bf In-context learning}.  ICL provides an adaptable alternative to fine-tuning.  Rather than fine-tune the LLM directly, ICL augments a prompt with $n$ relevant examples (called \emph{shots}) preceding the query of interest.  Given sufficiently large models and pretraining data~\citep{brown2020language, wei2022emergent}, Transformer LLMs have proven adept at learning new concepts on the fly provided such few-shot prompting.
However, it is worth noting that ICL inference time increases dramatically as the number of shots grows (due to self-attention's quadratic complexity) and PEFT (when possible) is known to produce more accurate downstream learning results~\citep{brown2020language, liu2022few}.

\section{Mamba state-space models}\label{section:mambaSSMs}
For latent-variable dimension $d$ and maximum input sequence length $T$, the \mambablock{} defines state-space parameters $\BoldA, \BoldB_t,\BoldC_t, \BoldDelta_t \in \reals{}^{d \times d}$ for $t \in \{1, \dots, T \}$.  The matrix $\BoldDelta_t$ controls the discrete step-size.
Given an input sequence $\Boldu_1, \dots, \Boldu_T \in \reals{}^d$, the following linear mapping through latent states $\Boldx_1, \dots, \Boldx_T \in \reals{}^d$  is used to produce the output $\Boldy_1, \dots, \Boldy_T \in \reals{}^d$:
\begin{align}
  \Boldx_t &= \bar{\BoldA}_t \Boldx_{t-1} + \bar{\BoldB}_t \Boldu_{t}\label{equation:line1:stateSpace}\\
  \Boldy_t & = \BoldC_t \Boldx_t,\label{equation:line2:stateSpace}
\end{align}
where $\bar{\BoldDelta}_t = \texttt{softplus} ( \Linear{\BoldDelta_t}) \in \reals{}^{d \times d}$, $\bar{\BoldA}_t = \exp{(\bar{\BoldDelta}_{t} \BoldA)}$ and $\bar{\BoldB}_t = \BoldA^{-1}(\bar{\BoldA} - \I)\BoldB_t$.  In practice, $\BoldA, \BoldB_t,\BoldC_t$ and $\BoldDelta_t$ are diagonal matrices.

Due to the time-variance of Equations~\ref{equation:line1:stateSpace} and ~\ref{equation:line2:stateSpace}, previous SSM stability results are no longer applicable (discussed further in Appendix~\ref{appendix:previousStabilityResults}).  The Mamba foundation models were pretrained in full \fullp{} precision and, subsequently, official implementations have cautioned against training in reduced precision~\citep{mambaGithub, mambaHf}.  Thus, how sensitive the \mambablock{}'s recurrent dynamics are remains an open question.  We next answer the latter using theory from dynamical systems.

\subsection{Stable dynamics in the \mambablock{}}
For Mamba's discrete dynamical system in Equations~\ref{equation:line1:stateSpace} and ~\ref{equation:line2:stateSpace}, define
 \begin{align}\label{mamba:dds}
 \Boldx_t &= F_{\theta}(\Boldx_{t-1}, \Boldu_t),
 \end{align}
 where $\theta$ denotes the time-varying parameters described in Section~\ref{section:mambaSSMs}.  For input sequence $\Boldu_1, \dots, \Boldu_T$ and initial latent state vector $\Boldx_0$, we thus write
 \begin{align*}
   \Boldx_T &= F_{\theta}( F_{\theta}(\dots F_{\theta}( \Boldx_{0}, \Boldu_1))) \coloneq F_{\theta}^{T-1}(\Boldx_{0}, \Boldu_1).
 \end{align*}

 The rate of divergence between two scalar $\varepsilon$-close inputs to a discrete dynamical system is bounded by the system's maximal Lyapunov exponent $\mle{}$~\citep{mikhaeil2022difficulty}.  Given $\mle{}$ and two initial values $( \Boldx_{0}, \Boldu_1)$ and $( \Boldx_{0} + \varepsilon, \Boldu_1 + \varepsilon)$, the maximum deviation between these points grows as~\citep{laffargue2013large, sayama2015introduction}:
 {\small
\begin{align*}
   \max | F_{\theta}^N( \Boldx_{0}, \Boldu_1) - F_{\theta}^N( \Boldx_{0} + \varepsilon, \Boldu_1+ \varepsilon)| \in \bigo{|\varepsilon| \exp{(N \mle{} )}}.
\end{align*}
}
Thus, when $\lambda_{\texttt{max}} > 0$, nearby trajectories exponentially separate and, when $\lambda_{\texttt{max}} \leq 0$, nearby trajectories ultimately converge to the same fixed point or periodic cycles.

 The maximal Lyapunov exponent is defined as
 \begin{align*}
   \mle{} &\coloneq \lim_{T \to \infty} \frac{1}{T}\log \norm{\prod_{t = 0}^T \frac{\partial \Boldx_t}{\partial \Boldx_{t-1}}},
 \end{align*}
 where $\norm{}$ denotes the spectral norm for matrices.  For an arbitrary \mambablock{}, we prove the following:
 \begin{theorem}\label{theorem:mamba_mle}
  Let $(\Boldx_{t-1},\Boldu_t)$ be the latent state and input at an arbitrary time $t \in \{1, \dots, T \}$ within a \mambablock{}.  Then small changes $(\Boldx_{t-1} + \varepsilon,\Boldu_t + \varepsilon)$ produce deviations which are exponentially non-increasing over discrete-time.  That is,  $\max | F_{\theta}^N( \Boldx_{t-1}, \Boldu_t) - F_{\theta}^N( \Boldx_{t-1} + \varepsilon, \Boldu_t+ \varepsilon)| \in \bigo{\varepsilon \exp{(N \zeta )}}$, for some scalar $\zeta \leq 0$.
 \end{theorem}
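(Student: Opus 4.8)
The plan is to establish the deviation bound directly from the recursion in Equation~\ref{equation:line1:stateSpace} rather than through the generic linearization estimate. At the first of the $N$ steps the input $\Boldu_t$ is perturbed, which alters the input-dependent maps $\bar{\BoldDelta}_t,\bar{\BoldB}_t,\bar{\BoldC}_t$ as well as the state argument; since $\softplus{\cdot}$ and $\Linear{\cdot}$ are globally Lipschitz and the relevant states and inputs lie in a bounded region, the deviation produced by this step is bounded by $C\varepsilon$ for a constant $C$ independent of $N$. At every subsequent step both trajectories see the same input, so each update $\Boldx_{t-1}\mapsto\Boldx_t$ is \emph{affine} with linear part $\bar{\BoldA}_t$ --- the term $\bar{\BoldB}_t\Boldu_t$ does not depend on the state --- giving $\Boldx_t-\Boldx_t' = \bar{\BoldA}_t(\Boldx_{t-1}-\Boldx_{t-1}')$ exactly, with no Taylor remainder to control. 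Iterating, $\max\norm{F_\theta^N(\Boldx_{t-1},\Boldu_t)-F_\theta^N(\Boldx_{t-1}+\varepsilon,\Boldu_t+\varepsilon)} \leq C\varepsilon\prod_s\norm{\bar{\BoldA}_s}$, the product running over the remaining (unperturbed) steps, so it remains to bound this product; $\zeta$ will then be read off as $\limsup_N\frac1N\sum_s\log\norm{\bar{\BoldA}_s}$, the same average log-Jacobian-norm that defines $\mle{}$.

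Next I would bound $\norm{\bar{\BoldA}_t}$ strictly below $1$. Since $\BoldA$ and $\BoldDelta_t$ are diagonal, so is $\bar{\BoldA}_t = \exp(\bar{\BoldDelta}_t\BoldA)$, and its diagonal entries are $\exp$ of the diagonal of $\bar{\BoldDelta}_t\BoldA$. The diagonal of $\bar{\BoldDelta}_t = \softplus{\Linear{\BoldDelta_t}}$ is strictly positive because $\softplus{x}=\log(1+e^x)>0$ for all real $x$, and in the Mamba architecture $\BoldA$ is parametrized so that its diagonal is strictly negative (it is stored as $-\exp(\cdot)$). Hence $\bar{\BoldDelta}_t\BoldA$ has strictly negative diagonal, every diagonal entry of $\bar{\BoldA}_t$ lies in $(0,1)$, and $\norm{\bar{\BoldA}_t}<1$. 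By submultiplicativity of the spectral norm $\prod_s\norm{\bar{\BoldA}_s}<1$, hence $\frac1N\sum_s\log\norm{\bar{\BoldA}_s}\leq 0$ for every $N$ and $\zeta\leq 0$, which is the claim. (The linear readout $\Boldy_t=\bar{\BoldC}_t\Boldx_t$ only multiplies the bound by $\norm{\bar{\BoldC}_t}$, so the same conclusion holds at the level of outputs.)

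I expect the computation to be short precisely because the state update is affine; the only genuinely nonlinear ingredient is the dependence of $\bar{\BoldA}_t,\bar{\BoldB}_t$ on the perturbed input, dispatched by Lipschitz continuity, so the real content is the structural fact that $\partial\Boldx_t/\partial\Boldx_{t-1}=\bar{\BoldA}_t$ together with the sign constraint on $\BoldA$ making it a contraction. The point to handle with care is that $\norm{\bar{\BoldA}_t}$, though always strictly below $1$, is \emph{not} uniformly bounded away from $1$: the discretization $\bar{\BoldDelta}_t$ can be arbitrarily small (since $\softplus{x}\to 0$ as $x\to-\infty$), so one obtains only $\zeta\leq 0$ and not $\zeta<0$ --- which is exactly what the theorem asserts; a strictly negative exponent would additionally require a positive lower bound on the step sizes. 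One should also confirm that the first-step Lipschitz constant $C$ and the readout norms $\norm{\bar{\BoldC}_t}$ are genuinely $N$-independent so they can be absorbed into $\bigo{\cdot}$.
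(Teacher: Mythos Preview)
Your argument is correct and rests on the same structural observation as the paper --- that $\partial\Boldx_t/\partial\Boldx_{t-1}=\bar{\BoldA}_t$ is diagonal with entries in $(0,1]$ because $\bar{\BoldDelta}_t$ has positive diagonal and $\BoldA$ has negative diagonal --- but the packaging is genuinely different. The paper proceeds through the formal Lyapunov-exponent machinery: it computes $\mle$ as the limit $\lim_{T\to\infty}\frac1T\log\norm{\prod_t\bar{\BoldA}_t}$, shows this limit is $\leq 0$ by the sign argument, and then invokes the cited general bound $\bigo{\varepsilon\exp(N\mle)}$; the extension from initial time $0$ to arbitrary $t$ is handled by a separate time-reindexing lemma. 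You instead exploit that the state recursion is \emph{affine} in $\Boldx_{t-1}$, so the difference propagates exactly as $\prod_s\bar{\BoldA}_s$ with no linearization remainder, and you treat the single input-perturbed step via a Lipschitz constant absorbed into the big-$\mathcal{O}$. This is more elementary and more self-contained: it avoids the $T\to\infty$ limit, it works directly at finite $N$ and arbitrary starting time without a reindexing step, and it makes explicit the role of the input perturbation (which the paper's Jacobian-in-$\Boldx$ computation leaves implicit). Your closing remark that $\norm{\bar{\BoldA}_t}$ is not uniformly bounded away from $1$, so one can only conclude $\zeta\leq 0$ rather than $\zeta<0$, is a point the paper does not make explicit. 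One small note: in the paper's formalism the matrices $\bar{\BoldDelta}_t,\bar{\BoldB}_t,\bar{\BoldC}_t$ are treated as time-varying \emph{parameters} rather than functions of $\Boldu_t$, so your first-step Lipschitz argument is handling a more general situation than strictly needed --- though it is the right thing to do if one takes the actual Mamba architecture (where these are input-dependent) at face value.
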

   
 The proof of Theorem~\ref{theorem:mamba_mle} is available in Appendix~\ref{appendix:stableDynamicsProof}, where the maximal Lyapunov exponent for an arbitrary \mambablock{} is first proven to be non-positive.  The main result subsequently follows.

 Thus, the latent states of Mamba and Mamba-2 models are stable under small input changes.  However, variables $\Boldy_1, \dots, \Boldy_T$ are the primary outputs for such models, particularly for LLM applications.  We next show that, given Theorem~\ref{theorem:mamba_mle}, Mamba and Mamba-2 output variables are also stable.
 \begin{theorem}\label{theorem:mamba_mle_output}
   Assume $(\Boldx_{t-1} + \varepsilon,\Boldu_t + \varepsilon)$ produce deviations which are exponentially non-increasing over discrete-time.  Then small changes to the output $\Boldy_t$ are also exponentially non-increasing over discrete time.
 \end{theorem}
The proof of Theorem~\ref{theorem:mamba_mle_output} is available in Appendix~\ref{appendix:stableOutputsProof}.  Thus, by Theorems~\ref{theorem:mamba_mle} and ~\ref{theorem:mamba_mle_output}, the latent and output states of both Mamba and Mamba-2 models are stable to changes encountered during recurrency.
 
\subsubsection{Divergence bounds for automatic mixed-precision and PEFT}

For an arbitrary \mambablock{}, let $\Boldx_0, \dots, \Boldx_T$ and $\Boldu_1, \dots, \Boldu_T$ be the trajectory of latent and input states under full-precision fine-tuning.  During a forward pass, automatic mixed-precision (AMP) saves time and memory by computing forward activations in half-precision (\halfp{} or \bhalfp{}).  During a backward pass, AMP computes gradients in half-precision and up-casts to full-precision prior to updating.  MFPT thus introduces changes to the inputs $\Boldu_1, \dots, \Boldu_T$ (which are passed through a \texttt{Swish}) and latent states  $\Boldx_0, \dots, \Boldx_T$ through $\bar{\BoldDelta}_t$ (which is passed through a \texttt{softplus}), up/downcasting of parameter updates, and the aforementioned changes to the input states.

Potentially in concert with MPFT, PEFT via LoRA induces changes by learning low-rank matrices $\hat{\BoldA}_t, \hat{\BoldB}_t, \hat{\BoldC}_t$
\begin{align*}
  \hat{\Boldx}_t &= (\bar{\BoldA}_t + \hat{\BoldA}_t)\Boldx_{t-1} + (\bar{\BoldB}_t  + \hat{\BoldB}_t) \Boldu_{t}\\
  \hat{\Boldy}_t & = (\BoldC_t + \hat{\BoldC}_t)\Boldx_t
\end{align*}

Let $\Boldx_0', \dots, \Boldx_T'$ and $\Boldu_1', \dots, \Boldu_T'$ be the trajectory of latent and input states under MPFT and/or PEFT.  We thus have the following two theorems:
\begin{theorem}\label{theorem:mamba_amp}
Let $\epsilon_t^u = \max | \Boldu_t - \Boldu_t'|$, i.e., the maximum, positive scalar-difference between the $t$th MPFT/PEFT and full fine-tuning input state.  Similarly, let $\epsilon_t^x = \max | \Boldx_t - \Boldx_t' | $ and $\epsilon^* = \max \bigcup_{t = 0}^T \{\epsilon_t^u, \epsilon_t^x \}$ (where we define $\Boldu_0 = 0$).  We thus have
\begin{align*}
  \max | F_{\theta}^T( \Boldx_{0}, \Boldu_1) - F_{\theta}^T( \Boldx_{0}', \Boldu_1')| \in \bigo{\epsilon^* \exp{(T \zeta )}},
\end{align*}
for some scalar $\zeta \leq 0$.  Thus, differences introduced by AMP and/or PEFT for Mamba models do not exponentially compound over discrete-time.
 \end{theorem}

\begin{theorem}\label{theorem:mamba_amp_output}
  Let $\Boldy_t'$ be the trajectory of output states produced by $\Boldx_0', \dots, \Boldx_T'$ and $\Boldu_1', \dots, \Boldu_T'$ under MPFT and/or PEFT.  Then we thus have
\begin{align*}
  \max | \Boldy_T - \Boldy_T'| \in \bigo{\epsilon^* \exp{(T \zeta )}},
\end{align*}
for some scalar $\zeta \leq 0$.  Thus, small changes to the output $\Boldy_t$ resulting from MPFT and/or PEFT are exponentially non-increasing over discrete time.
 \end{theorem}

The proof of Theorem~\ref{theorem:mamba_amp} is available in Appendix~\ref{appendix:stableAmpProof}.  Theorem~\ref{theorem:mamba_amp_output} may be derived by directly applying the proof technique of Theorem~\ref{theorem:mamba_mle_output} to the results of Theorem~\ref{theorem:mamba_amp}.  Thus, the latent and output states of both Mamba and Mamba-2 models are stable to changes encountered during AMP and MFPT.

\section{Experiments}\label{section:experiments}
\subsection{Non-divergent \mambablock{} dynamics}\label{section:mambablockDeviations}
We explore the implications of Theorem~\ref{theorem:mamba_mle_output} for arbitrary \mambablock{}s.  For $n$ random trials and various $\varepsilon$, we randomly initialize the inputs (i.e., $u_t$ for $t = 1, \dots, T$) and parameters (i.e., $\BoldA, \BoldB_t,\BoldC_t, \BoldDelta_t$) of a \mambablock{} with $T = 2048$ and latent-variable dimension $d = 64$.

For each trial and $\varepsilon$, we first calculate the \mambablock{} outputs $\Boldy_1, \dots, \Boldy_T$ (where $\Boldy_t = F_{\theta}^t( \Boldx_{0}, \Boldu_1)$).  We then perturb the initial states $( \Boldx_{0} + \varepsilon, \Boldu_1 + \varepsilon) = ( \Boldx_{0}', \Boldu_1')$ and calculate the subsequent perturbed model outputs $\Boldy_1', \dots, \Boldy_T'$ (where $\Boldy_t' = F_{\theta}^t( \Boldx_{0}', \Boldu_1') = F_{\theta}^t( \Boldx_{0}+\varepsilon, \Boldu_1' + \varepsilon)$).  The maximum deviation is then calculated for each time point, i.e., $\max |y_t - y_t' | = \max | F_{\theta}^t( \Boldx_{0}, \Boldu_1) - F_{\theta}^t( \Boldx_{0} + \varepsilon, \Boldu_1+ \varepsilon)|$.  Figure~\ref{fig:mambablockDynamics_eps1} shows both the mean maximum deviation per time averaged across trials, as well as the standard deviation, for $\varepsilon = 0.1$.

As we can see, the maximum divergence starts out proportional to the change in initial conditions, but quickly decays and converges to the original output trajectory.  Thus, changes to a \mambablock{}s inputs do not compound over discrete time.  Additional experiments for $\varepsilon \in \{ 0.01, 0.001 \}$ are available in Appendix~\ref{appendix:epsilon} and demonstrate the same dynamics.
\begin{figure*}
  \centering
  \includegraphics[trim=0.11in 0.1in 0.1in 0.10in, clip=true,scale=0.6]{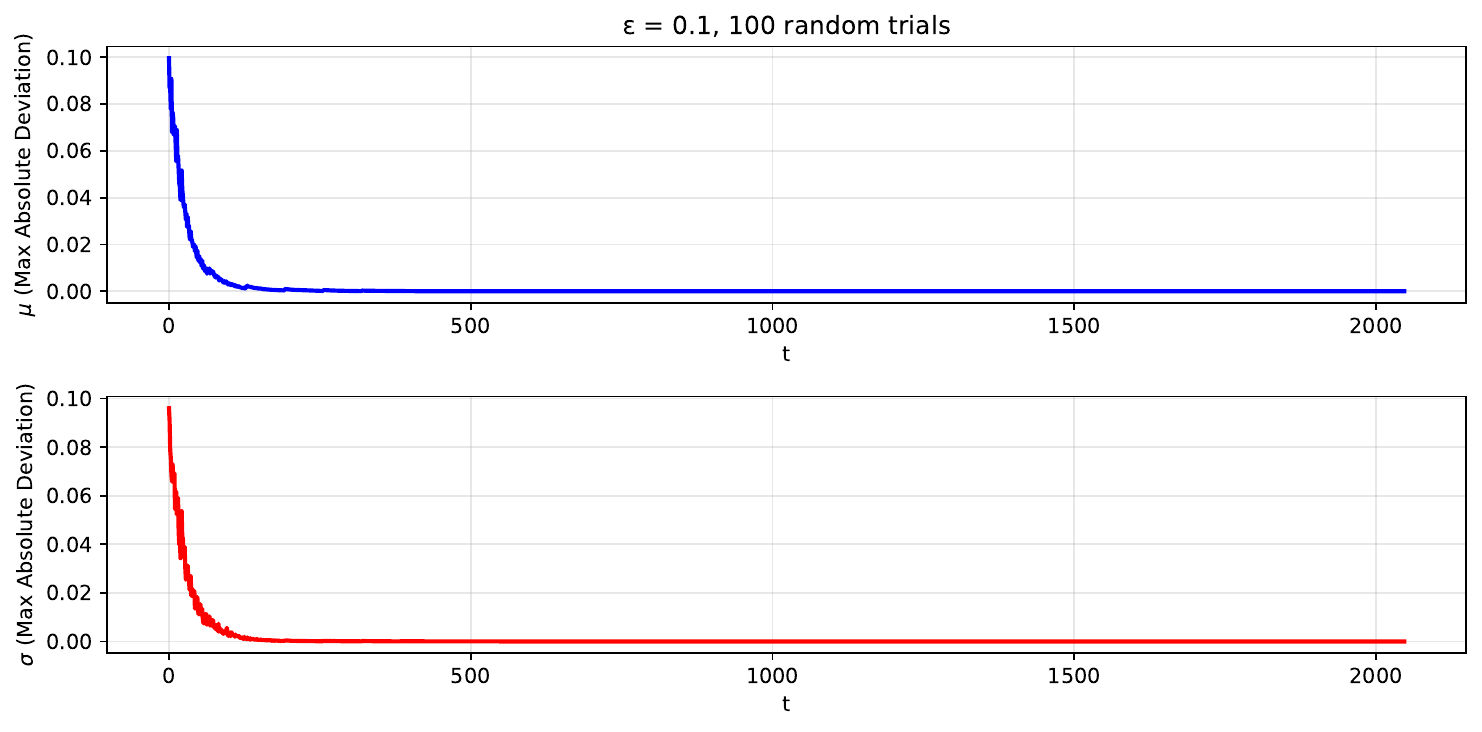}
  \caption{For 100 random \mambablock{}s, initial latent and input states $(\Boldx_0, \Boldu_0)$ are deviated by $\varepsilon$.  The resulting maximum deviation in output states is computed and shown to exponentially decrease over discrete time.  $\mu$ denotes the maximum absolute deviation at discrete time $t$ averaged over all trials, while $\sigma$ denotes the respective standard deviation at each discrete time.}
  \label{fig:mambablockDynamics_eps1}
\end{figure*}

\subsection{Non-divergent Mamba MPFT and PEFT}
We explore the implications of Theorem~\ref{theorem:mamba_amp_output} for combinations of MPFT and PEFT on pretrained Mamba models.  We focus on utilizing LoRA~\citep{hu2021lora}, which is arguably the most widely used PEFT framework for LLMs.  Using the \texttt{Alpaca} dataset~\citep{alpaca}, \texttt{Mamba 160M}, \texttt{410M}, and \texttt{790M} models are fine-tuned for three epochs with a maximum sequence length of 512.  We denote the targeting of all linear layers (ALL) for LoRA as \emph{ALL LoRA}, the targeting of a subset of linear layers (SLL) for LoRA as \emph{SLL LoRA} (i.e., $\BoldDelta_{t}, \BoldB_{t}$, and $\BoldC_{t}$ parameters), and no adapters as \emph{Full} (i.e., full fine-tuning). 

Each fine-tuning run occurred on a single Nvidia A10G GPU (24 GB total memory).  To further limit extraneous numerical effects, the same batch size is used for all \fullp{}, \halfp{}, and \bhalfp{} experiments for a given model size.  While this leads to hardware underutilization (i.e., non-saturated GPU memory for mixed-precision and LoRA experiments), this is necessary to guarantee no divergence is due to differences in parameter update schedules.  For comparison, two Transformer-based LLM families of similar parameter counts are fine-tuned using the same experimental setup: \texttt{Pythia} (sizes \texttt{160M}, \texttt{410M}, and \texttt{1B}) and \texttt{OpenELM}~\citep{mehta2024openelm} (sizes \texttt{270M} and \texttt{450M}).  The training recipe for all models was adapted from~\citet{tunstall2023zephyr}, with the \texttt{AdamW\_torch} optimizer and a \texttt{cosine annealing} schedule.  Further experimental details are available in Appendix~\ref{appendix:experimentalDetails}.

\begin{figure*}
  \centering
  \subfigure[Mean full-precision (\fullp{}) divergence in MMLU performance.]{\label{fig:divergence_mmlu}\includegraphics[trim=0.11in 0.14in 0.05in 0.10in, clip=true,scale=0.33]{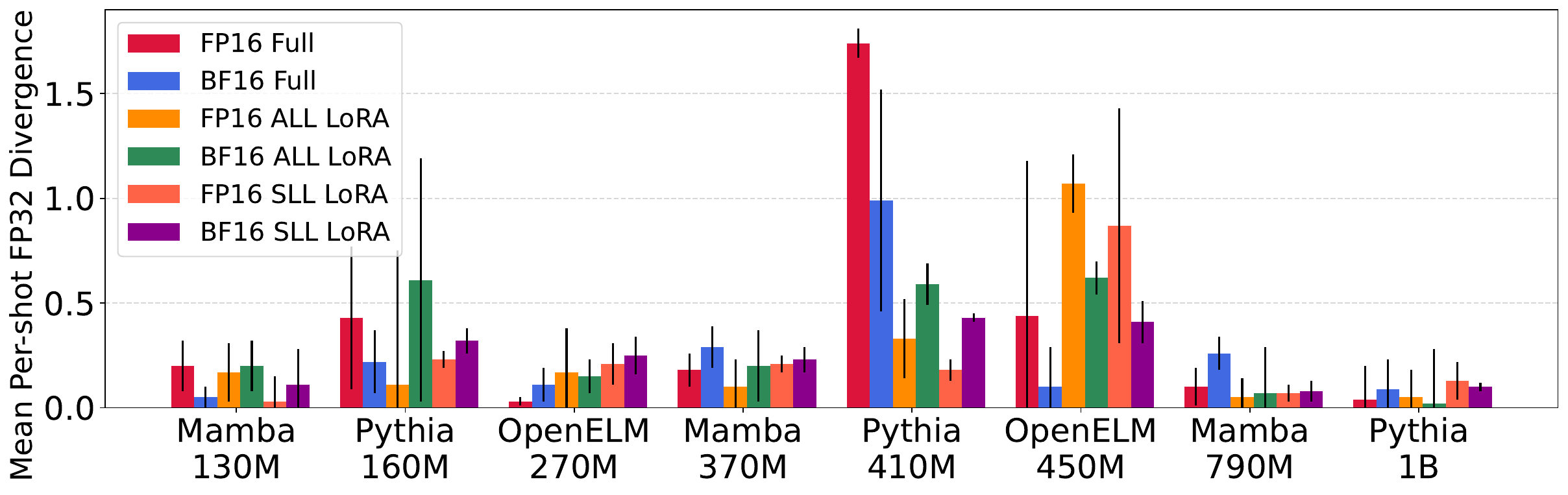}}
  \subfigure[Mean full-precision (\fullp{}) divergence in Winogrande performance.]{\label{fig:divergence_winogrande}\includegraphics[trim=0.11in 0.14in 0.05in 0.10in, clip=true,scale=0.33]{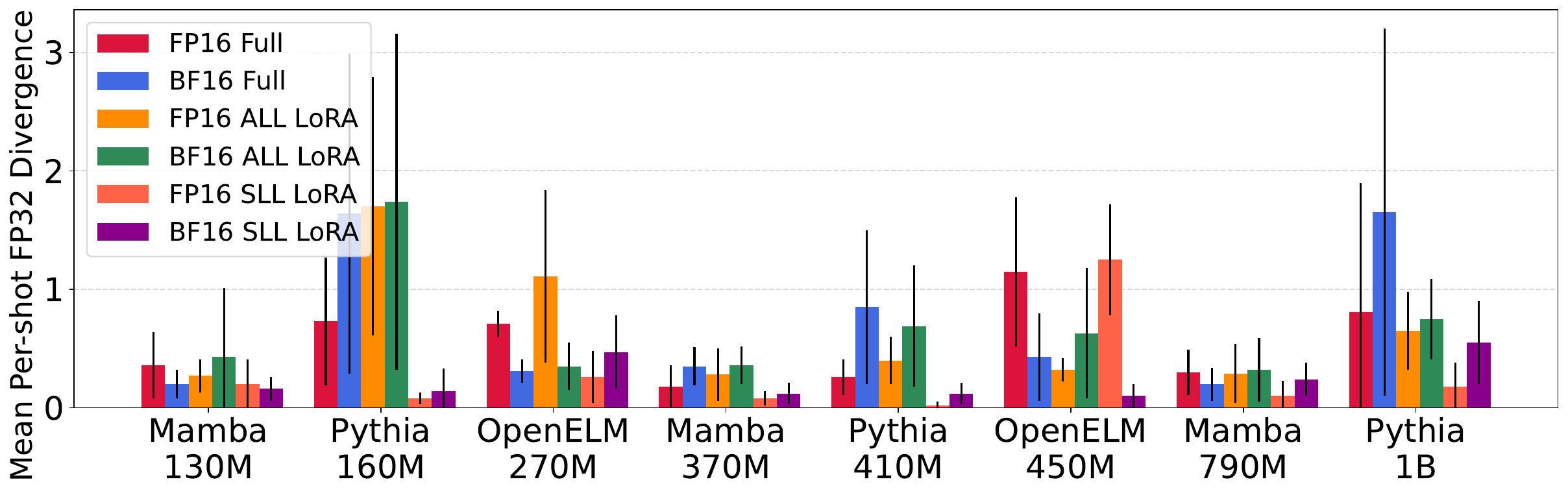}}  
  \caption{Mean full-precision (\fullp{}) divergence in performance for Mamba, Pythia, and OpenELM models.  Models are fine-tuned over the \texttt{Alpaca} dataset using different combinations of MPFT and PEFT.  Full fine-tuning (i.e., no PEFT adapters) is denoted as \texttt{Full}.}
  \label{fig:finetuning_precisions}
\end{figure*}

All models were evaluated using the LM evaluation harness from Eleuther AI~\citep{eval-harness}.  {\bf Model performance is measured as percent accuracy} using the MMLU~\citep{hendrycks2020measuring} and Winogrande~\citep{sakaguchi2021winogrande} datasets.
The difference in model performance is reported as the mean \emph{divergence} (i.e., absolute difference) between the original full-precision and respective mixed-precision model, averaged over \{0, 1, 3, 5\}-shot percent accuracy.  Thus, {\bf a divergence greater than one denotes an average difference greater than one entire percentage of accuracy.}  We thus refer to a divergence greater than unity as a \emph{large deviation spike}.

Mean divergence results for all models are displayed in Figure~\ref{fig:divergence_mmlu} and Figure~\ref{fig:divergence_winogrande} for MMLU and Winogrande, respectively.
Across mixed-precisions and adapter settings, Mamba displays smaller divergences than both Pythia and OpenELM models.  E.g., {\bf for Winogrande evaluations, \texttt{Alpaca} fine-tuning with Mamba models is an average 2.9 and 2.4 times smaller in mean divergence than Pythia and OpenELM models, respectively}.  For MMLU evaluations, \texttt{Alpaca} fine-tuning with Mamba models is an average 2.6 times smaller in mean divergence than both Pythia and OpenELM models.  Importantly, Mamba models do not exhibit large deviation spikes after fine-tuning, in contrast to both Pythia and OpenELM models.

We note that the results in Figure~\ref{fig:finetuning_precisions} display the fine-tuning of 72 LLMs and 576 few-shot evaluations.  In Appendix~\ref{appendix:divergences}, these experiments are expanded to include the \texttt{LIMA} fine-tuning dataset~\citep{zhou2024lima} as well as the standard deviations for all mean deviation results.  Across all experiments, Mamba models are significantly more stable for MPFT/PEFT compared to Transformer-based LLMs.  E.g., {\bf for MMLU evaluations, \texttt{LIMA} fine-tuning with Mamba models is an average 7 and 3.3 times smaller in mean divergence than Pythia and OpenELM models, respectively}.  For Winogrande evaluations, \texttt{LIMA} fine-tuning with Mamba models is an average 4.1 and 3.3 times smaller in mean divergence than Pythia and OpenELM models, respectively.  {\bf Across the two fine-tuning datasets, two benchmarks, and six evaluated MPFT/PEFT combinations, Pythia and OpenELM produce 9 and 4 large deviation spikes, respectively, while Mamba produces zero.}

\subsection{Hyperparameter tuning robustness of Mamba models}\label{section:hyperparameterTuning}
We empirically demonstrate that Mamba models are robust to the choice of PEFT hyperparemters.  Using the training recipe of ~\citet{tunstall2023zephyr} as a basis, we conduct an extensive hyperparameter search across the learning rate, LoRA dimension, and number of warmup steps.  From the Cartesian-product of these three parameters, 150 hyperparameter configurations were sampled and used to fine-tune \texttt{Mamba 370M} over the Openhermes dataset~\citep{openhermes}, which consists of 242,000 supervised samples.  For comparison, \texttt{Pythia 410M} is similarly fine-tuned using the same set of 150 hyperparameter configurations.

\begin{figure*}
  \centering
  \subfigure[{\small \texttt{Mamba 370M}}]{\label{fig:mamba_hyperparameter_opt}\includegraphics[trim=0.1in 0.1in 1.0in 0.1in,
      clip=False,scale=0.47]{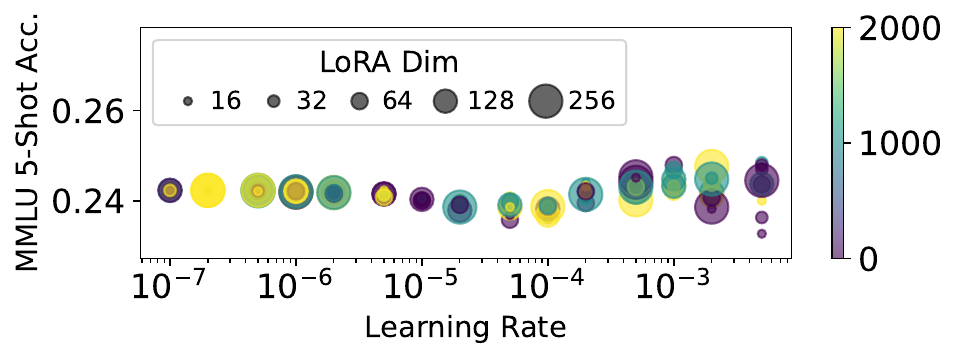}}
  \subfigure[{\small \texttt{Pythia 410M}}]{\label{fig:pythia_hyperparameter_opt}\includegraphics[trim=0.32in 0.1in 0.0in 0.1in,
      clip=true,scale=0.47]{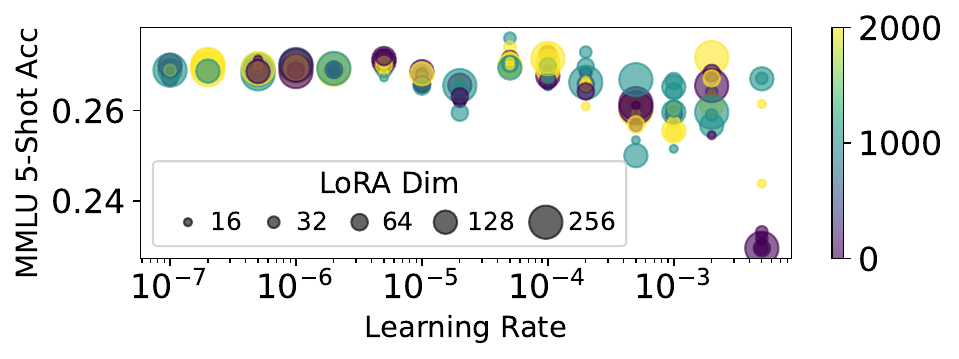}} 
  \caption{Fine-tuning hyperparameter search for OpenHermes.  Each point is a different hyperparameter configuration.  \sll{} was used for both models.
    The $x$-axis is the learning rate, the $y$-axis is resulting MMLU 5-shot performance, bubble size is the LoRA dimension, and the color is the number of warmup steps $\in \{0, 1\mbox{k}, 2\mbox{k}\}$.}
  \label{fig:hyperparamter_op}  
\end{figure*}

The MMLU 5-shot performance for each of the 150 Mamba and Pythia fine-tuned models is displayed in Figure~\ref{fig:hyperparamter_op}.  \texttt{Pythia 410M} is capable of higher performance than \texttt{Mamba 370M}, where the average accuracy for the former and the latter are 26.5\% and 24.8\%, respectively.  However, \texttt{Mamba 370M} is much more robust to the choice of hyperparameters, with a difference of 1.5\% between the minimum (23.3\%) and maximum (24.8\%).  In contrast, \texttt{Pythia 410M} fine-tuned models display a large performance difference of 4.7\% between the minimum (22.9\%) and maximum (27.6\%).

\subsection{Stable instruction tuning improves Mamba ICL for natural language tasks}\label{section:instructionTuning}
With both stable MPFT and PEFT, we determine the impact of instruction tuning Mamba and Mamba-2 models on ICL for natural language tasks.
We instruction fine-tuned Mamba and Mamba-2 pretrained models using {\tt ALL} {\tt LoRA}, the OpenHermes dataset, and the training recipe described in Section~\ref{section:experiments} (with \bhalfp{}).
Zero and few-shot performance are evaluated using the same five standard natural language benchmarks used in Section~\ref{section:hyperparameterTuning}.  ICL performance is reported as the \emph{average improvement percentage} of \{1, 3, 5\}-\emph{shot} versus 0-\emph{shot} (AIPSS).  For comparison, Pythia pretrained models are instruction fine-tuned using the same training recipe and {\tt ALL} {\tt LoRA} (i.e., all Pythia linear layers are adapted).

\begin{figure*}[htbp!]
  \centering
  \subfigure[{\small Mamba Pretrained}]{\label{fig:mamba_pretrained_icl}\includegraphics[trim=0.0in 0.25in 0.2in 0.1in,
      clip=true,scale=0.46]{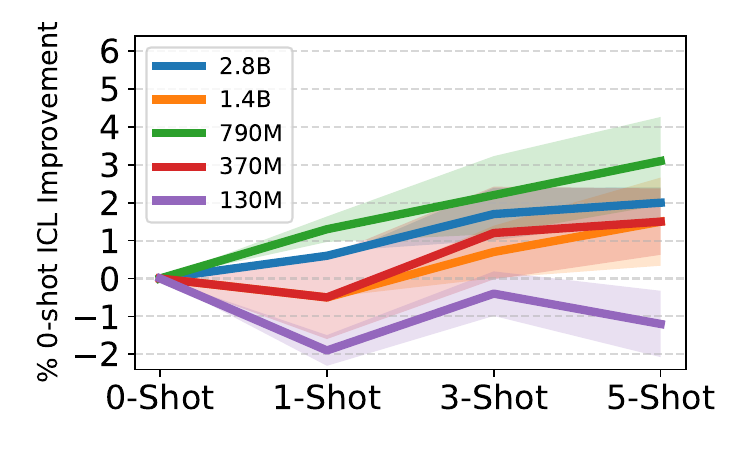}}
  \subfigure[{\small Mamba-2 Pretrained}]{\label{fig:mamba2_pretrained_icl}\includegraphics[trim=0.4in 0.25in 0.2in 0.1in,
      clip=true,scale=0.46]{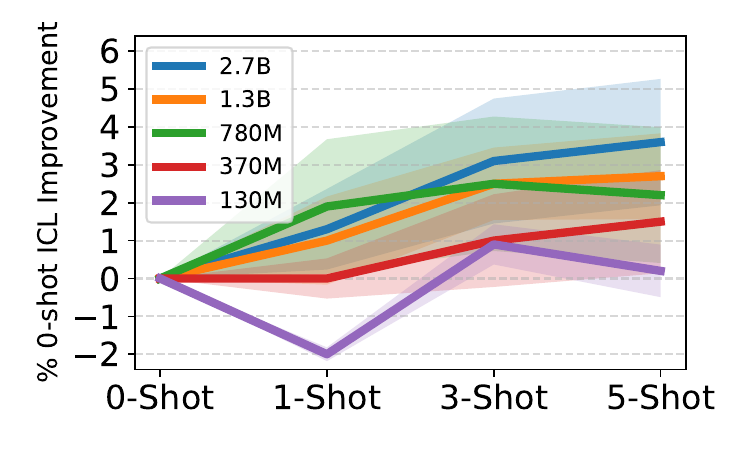}}
  \subfigure[{\small Pythia Pretrained}]{\label{fig:pythia_pretrained_icl}\includegraphics[trim=0.4in 0.25in 0.2in 0.1in,
      clip=true,scale=0.46]{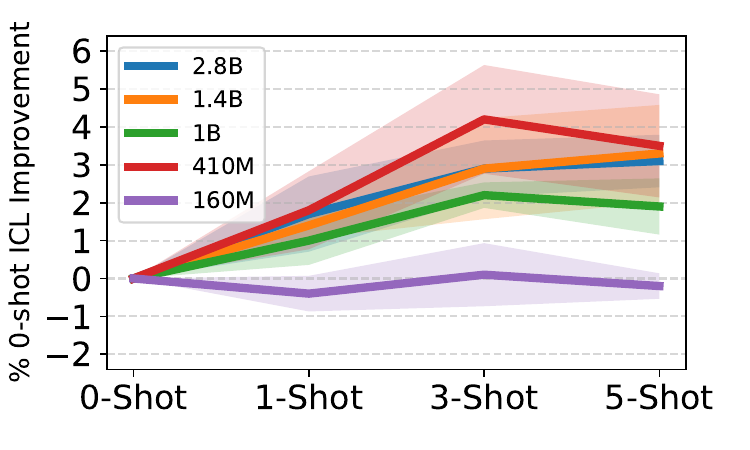}}  
  \subfigure[{\small Mamba \all{}}]{\label{fig:mamba_sft_icl}\includegraphics[trim=0.0in 0.25in 0.2in 0.1in,
      clip=true,scale=0.46]{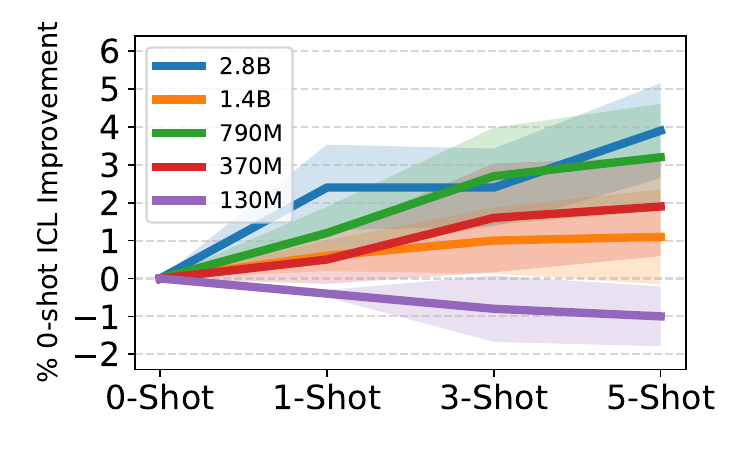}}  
  \subfigure[{\small Mamba-2 \all{}}]{\label{fig:mamba2_sft_icl}\includegraphics[trim=0.4in 0.25in 0.2in 0.1in,
      clip=true,scale=0.46]{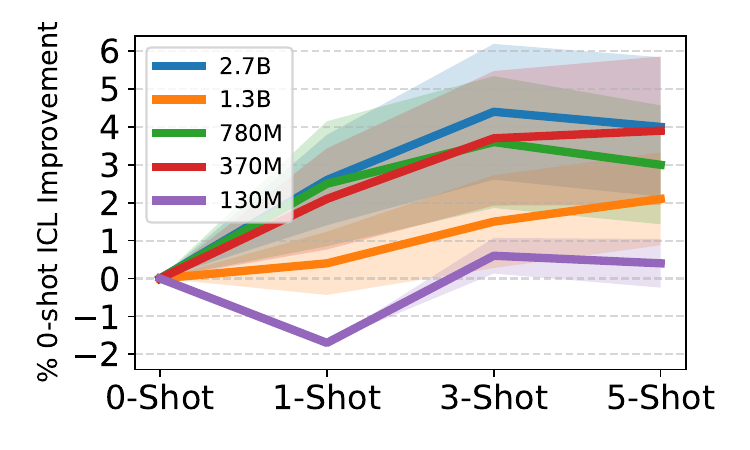}}  
  \subfigure[{\small Pythia \all{}}]{\label{fig:pythia_sft_icl}\includegraphics[trim=0.4in 0.25in 0.2in 0.1in,
      clip=true,scale=0.46]{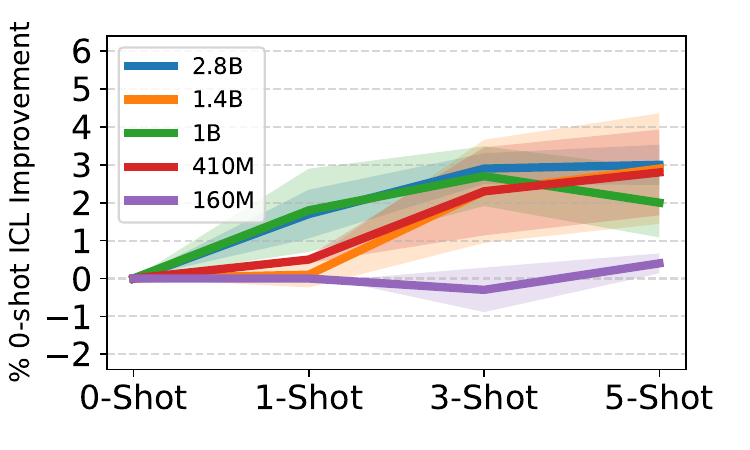}}  
  \caption{Instruction tuning narrows the ICL gap between Mamba and Pythia, and creates a gap from Pythia to Mamba-2 models.  \all{} models were instruction tuned on the OpenHermes~\citep{openhermes} dataset for one epoch.  Performance is reported as the average improvement percentage of \{1, 3, 5\}-shot versus 0-shot over five standard natural language benchmarks.}
  \label{fig:icl_plots}
\end{figure*}

Figure~\ref{fig:icl_plots} displays AIPSS for pretrained and instruction fine-tuned Mamba and Pythia models.  As previously noted, pretrained Mamba models do not display similar ICL ability as comparable Pythia models on the evaluated standard NLP benchmarks.  In particular, \texttt{Mamba 2.8B}, the largest pretrained Mamba model, displays inconsistent zero-shot improvements as the number of shots increase.  While pretrained Mamba-2 models display significantly better ICL ability than Mamba models, Mamba-2 models smaller than 780 million parameters struggle.

However, after instruction tuning, all Mamba models larger than \texttt{Mamba 130M} consistently improve in ICL performance as the number of shots increase.  Similarly, the majority of Mamba-2 models larger than \texttt{Mamba-2 130M} greatly improve in ICL performance.  Thus, while pretrained Mamba and Mamba-2 models are only capable of 38\% and 82\% (respectively) of the AIPSS compared to similar pretrained Pythia models, instruction tuned Mamba and Mamba-2 models are capable of 81.5\% and 133\% of the AIPSS relative to similarly fine-tuned Pythia models.

In addition to Pythia, we compare instruction tuned Mamba models' ICL abilities to other state-of-the Transformer LLMs of comparable sizes and pretraining token counts: \texttt{OpenELM}~\citep{mehta2024openelm} (sizes \texttt{270M}, \texttt{450M}, and \texttt{1.1B}), \texttt{TinyLlama 1.1B}~\citep{zhang2024tinyllama}, and \texttt{OLMo 1.2B}~\citep{groeneveld2024olmo}.  We additionally group models by parameter count, to understand ICL as an emergent behavior of fine-tuned Mamba models.  To further explore the emergence of this ability for Mamba SSMs, we include recent Mamba-specific PEFT methods~\citep{yoshimuramambapeft}.
Displayed in Appendix~\ref{appendix:iclStudy}, pretrained SSMs and Transformers of parameter counts 270 million and less display slight or detrimental ICL abilities (i.e., few-shot performance is worse than zero-shot).  For models of greater than 450 million parameters, the majority of SSMs and Transformers display positive ICL abilities, with the exception of \texttt{Mamba 1.4B}.

Instruction tuning greatly smooths ICL performance across both parameter classes.  Thus, while instruction tuned SSMs and Transformers of 160 million parameters or fewer continue to display slight or detrimental ICL abilities, \textbf{all models of 270 million parameters and greater show positive ICL abilities, with \texttt{Mamba-2} \texttt{2.7B} closely matching (or exceeding) the ICL ability of the top performing Transformer LLM per-shot}.  Furthermore, in terms of an emergent ability~\citep{wei2022emergent}, ICL emerges for instruction tuned Mamba and Mamba-2 SSMs of size 370 million and greater, while no clear trend presents itself for pretrained models.

\section{Conclusions and Future Directions}
Using dynamical systems theory, we've shown that the recurrent dynamics of Mamba SSMs are stable to small input perturbations.  We've extensively confirmed this result, showing that Mamba training is significantly more robust to changes introduced during MPFT and PEFT than Transformer-based LLMs across widely-used fine-tuning datasets and benchmarks.  Furthermore, by including many more few-shot evaluations in our divergence metric than previous work, our stability studies uncovered that Transformer LLMs are susceptible to large deviation spikes during MPFT alone, as well as when MPFT is combined with PEFT.  As MPFT and PEFT are two of the most common LLM fine-tuning frameworks, and as lower precision formats (e.g., \texttt{NF4} and \texttt{FP8}) become more widely used for fine-tuning, we advocate for more extensive stability analysis (e.g., including several few-shot evaluations when calculating divergence) to expose large deviation spikes.

In order to mitigate large deviation spikes, we've shown that, unlike Transformer LLMs, Mamba models are extremely stable and do not exhibit this phenomena.  Furthermore, in addition to stable MPFT and PEFT, we've shown that a core LLM reasoning ability (ICL) of Mamba models significantly improves with instruction tuning; i.e., we've shown that while pretrained Mamba models' ICL capabilities lag behind those of pretrained LLMs on natural language tasks, instruction tuned Mamba models' rival (and surpass, in many Mamba-2 cases) the ICL capabilities of instruction tuned Transformers.  Thus, Mamba models are extremely stable learners with the potential to display the ICL capabilities of attention-based models once instruction tuned.

There are several avenues for future work. In particular, adapting Mamba’s CUDA kernels to support more
aggressive low-precision PEFT methods~\citep{dettmers2024qlora} would further decrease the hardware needed
to train Mamba models, while providing additional speedups and testing the limits of the derived stability
results. Furthermore, our theoretical contributions open the door for several follow up studies. E.g., additional
results building off Theorems~\ref{theorem:mamba_amp} and ~\ref{theorem:mamba_amp_output} may tackle generalization error bounds and privacy fine-tuning (by
considering a language modeling framework and calculating upper bounds under distributions of $\varepsilon$), and
Mamba-specific decoding schemes (where new algorithms may exploit the derived deviation bounds ensure
next-token predicitions lie within a deviation tolerance from the true optimal decoding).

\section{Relations to Previous Work}
{\bf MPFT divergence for LLMs}.
Previous work has studied the performance difference for attention-based LLMs under low-precision inference~\citep{dettmers2022gpt3, dettmers2023case} and fine-tuning~\citep{dettmers2024qlora}.  However, these works only measured divergence using single-shot performance (i.e., zero-shot in ~\citep{dettmers2022gpt3, dettmers2023case}, 5-shot for MMLU and zero-shot otherwise in ~\citep{dettmers2024qlora}).  In contrast, we report the difference in model performance as the mean divergence between the full-precision and mixed-precision models, averaged over the accuracy of four different few-shot evaluations.  Thus, in contrast to previous work, our inclusion of multiple shots: a) uncovers significant divergence in widely used LLMs even at widely used mixed-precisions (\halfp{} and \bhalfp{}), and (b) more rigorously tests mixed-precision training's effect on one of LLMs' most important reasoning abilities (ICL).  We are not aware of any previous works which have attempted to understand the performance impact of MPFT on Mamba models.

{\bf Lyapunov stability and RNNs/SSMs}.
Lyapunov exponents have previously been considered~\citep{mikhaeil2022difficulty, vogt2022lyapunov} for classic RNN structures (e.g., vanilla RNNs, LSTMs, GRUs, PLRNNs, etc.), to determine when such models exhibit chaotic dynamics and the impact on the exploding/vanishing gradient phenomena\footnote{We note that this continues a long line of research exploring RNNs sensitivity to initial conditions and their subsequent ability to produce chaotic output~\citep{ribeiro2020beyond, laurent2017recurrent, bertschinger2004real, bertschinger2004edge}, although previous work did not leverage Lyapunov exponents.}.

For S4~\citep{gu2022efficiently} SSMs,~\citet{goel2022s} used Hurwitz matrices to characterize the numerical stability of linear time-invariant (LTI) S4 models.  Similarly, ~\citet{orvieto2023resurrecting} used the LTI property to derive the eigenvalue decomposition of S4-like models, providing stability conditions by bounding the corresponding eigenvalues (further discussed in Appendix~\ref{appendix:previousStabilityResults}).  However, such analysis is not applicable to time-varying models, such as Mamba, nor does it characterize the effects of sensitive dependence on initial conditions (e.g., divergence of two $\varepsilon$ close inputs).  To the best of our knowledge, no previous works have used Lyapunov exponents to explore the effects of mixed-precision on recurrent neural models or Mamba architectures.

{\bf In-context learning}.  Recent work has sought to understand the in-context learning (ICL) capabilities of Mamba LLMs when trained from scratch for specific tasks~\citep{park2024can, leeattention}.  Another line of recent work has sought to understand how hybrid Mamba-Transformer models may be directly distilled from Transformer models~\citep{wangmamba}.  However, to the best of our knowledge, no existing works have either theoretically explored the effects small input changes (e.g., due to mixed-precision) have on Mamba's recurrent dynamics or empirically explored such effects downstream impact on fine-tuning.

As previously noted, recent works \citep{park2024can, leeattention} have studied Mamba's ability to perform ICL by training Mamba models for specific tasks.  Such tasks include logistic regression, decision trees, and learning other simple function classes, following the work of \citet{garg2022can}.  We emphasize that, in this set up, relatively small Mamba models--33 million and 90 million parameters for \citet{leeattention} and \citet{park2024can}, respectively--are trained from scratch for every evaluated task.  Indeed, \citet{park2024can} notes that subsequent work is necessary to understand Mamba's ICL capabilities for language modeling using standard natural language benchmarks, as well as for larger model sizes.  Thus, our study of both the pretrained and instruction tuned ICL capabilities of Mamba/Mamba-2 LLMs for natural language tasks are complimentary to previous works. 

\section*{Failure Cases and Limitations}
While we explored the use of LoRA for Mamba models, many other PEFT adapters exist~\citep{liu2022few, li2021prefix, houlsby2019parameter, lester2021power}.  Furthermore, while mixed-precision using \halfp{} and \bhalfp{} were explored, lower-precision methods exist~\citep{dettmers2024qlora}.  Both are interesting directions for future work.  Finally, our timing and memory usage experiments using Alpaca did not consider the largest two Mamba models (1.4B and 2.8B) due to their exceeding A10G memory capacity for \fullp{} full fine-tuning.

\section*{Broader Impact Statement}
The Mamba models considered are all LLMs, and thus have the same potential positive and negative societal impacts as other LLMs (e.g., hallucinations).  Furthermore, fine-tuning is known to possibly erode existing LLM guardrails, and thus the described methods may be adapted for this fine-tuning use case (as is the case for all PEFT and MPFT methods).  However, MPFT/PEFT is shown to improve the quality of Mamba models for downstream applications, which may be adapted for all positive LLM applications in society (e.g., personal assistants, task automation, code completion, etc.).  Finally, the described methods decrease the computational constraints required to train and inference Mamba SSMs, which has implications for green ML (e.g., decreased CO2 emissions, positive climate change impact, etc.).

\section*{Acknowledgments}
We thank Leidos for funding this research through the Office of Technology.  This manuscript has been approved for public release {\bf 24-LEIDOS-0515-27826}.  We thank (NeurIPS) Reviewer Uruo, (TMLR) Reviewer z2sE, and (TMLR) Reviewer SpRc for helpful comments and feedback, which spurred the authors to derive and add several additional theoretical results.  This work is dedicated to the memory of Bella Halloran; thank you for supporting this one final paper, you are missed.

\bibliography{refs}
\bibliographystyle{tmlr}

\appendix
\section{Mamba stable dynamics proof}\label{appendix:stableDynamicsProof}
  Recall the state-space parameters and equations for the \mambablock{}; $\BoldA, \BoldB_t,\BoldC_t, \BoldDelta_t \in \reals{}^{d \times d}$ for $t \in \{1, \dots, T\} = [T]$.  Given an input sequence $\Boldu_1, \dots, \Boldu_T \in \reals{}^d$, the following linear mapping through latent states $\Boldx_1, \dots, \Boldx_T \in \reals{}^d$  is used to produce the output $\Boldy_1, \dots, \Boldy_T \in \reals{}^d$:
\begin{align}
  \Boldx_t &= \bar{\BoldA}_t \Boldx_{t-1} + \bar{\BoldB}_t \Boldu_{t}\label{appendix:equation:line1:stateSpace}\\
  \Boldy_t & = \BoldC_t \Boldx_t,\nonumber
\end{align}
where $\bar{\BoldDelta}_t = \texttt{softplus} (\Linear{\BoldDelta_t}) \in \plusreals{}^{d \times d}$, $\bar{\BoldA}_t = \exp{(\bar{\BoldDelta}_{t} \BoldA)}$, $\bar{\BoldB}_t = \BoldA^{-1}(\bar{\BoldA} - \I)\BoldB_t$, and $\plusreals{}$ is the set of non-negative real numbers.  In practice, $\BoldA, \BoldB_t, \BoldC_t$ and $\BoldDelta_t$ are diagonal matrices.

Furthermore, recall the following definitions:
 \begin{align*}
 \Boldx_t &= F_{\theta}(\Boldx_{t-1}, \Boldu_t)
 \end{align*}
 where $\theta$ denotes the aforementioned time-varying parameters.  For input sequence $\Boldu_t, \dots, \Boldu_T$ and initial latent state value $\Boldx_0$, we thus write
 \begin{align*}
   \Boldx_T &= F_{\theta}( F_{\theta}(\dots F_{\theta}( \Boldx_{0}, \Boldu_1))) \coloneq F_{\theta}^{T-1}(\Boldx_{0}, \Boldu_1).
 \end{align*}

 We first prove that, given two scalar $\varepsilon$-close inputs to a \mambablock{}, their deviations do not grow exponentially as the number of recurrences increases (Lemma~\ref{lemma:mamba_mle}).  The main result in the paper is subsequently proved.
 \begin{lemma}\label{lemma:mamba_mle}
For input $(\Boldx_{0}, \Boldu_1)$ to a \mambablock{}, small changes $( \Boldx_{0} + \varepsilon, \Boldu_1 + \varepsilon)$ produce deviations which are exponentially non-increasing over discrete-time.  That is,  $\max | F_{\theta}^N( \Boldx_{0}, \Boldu_1) - F_{\theta}^N( \Boldx_{0} + \varepsilon, \Boldu_1+ \varepsilon)| \in \bigo{|\varepsilon| \exp{(N \zeta )}}$, for some scalar $\zeta \leq 0$.
\end{lemma}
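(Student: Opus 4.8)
The plan is to reduce the claimed deviation bound to the single fact that the maximal Lyapunov exponent $\mle{}$ of the \mambablock{} recurrence is non-positive, and then to invoke the cited growth estimate $\bigo{\varepsilon \exp(N\mle{})}$ with $\zeta \coloneq \mle{}$. First I would write out the one-step state Jacobian of Equation~\ref{appendix:equation:line1:stateSpace}. The key structural observation is that in a \mambablock{} the discretized quantities $\bar{\BoldA}_t$ and $\bar{\BoldB}_t$ are functions of the \emph{input} $\Boldu_t$ only (through $\BoldDelta_t$ and $\BoldB_t$), never of the latent state $\Boldx_{t-1}$; hence $\partial \Boldx_t / \partial \Boldx_{t-1} = \bar{\BoldA}_t$ exactly, with no chain-rule corrections. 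Since $\BoldA$ and $\bar{\BoldDelta}_t$ are diagonal, $\bar{\BoldA}_t = \exp(\bar{\BoldDelta}_t \BoldA)$ is diagonal with $(i,i)$ entry $\exp(\bar{\Delta}_{t,ii} A_{ii})$, so the Jacobian product telescopes: $\prod_{t=1}^{T} \bar{\BoldA}_t$ is diagonal with $(i,i)$ entry $\exp\!\big(\sum_{t=1}^{T} \bar{\Delta}_{t,ii} A_{ii}\big)$, and its spectral norm is $\max_i \exp\!\big(\sum_{t=1}^{T}\bar{\Delta}_{t,ii} A_{ii}\big)$.

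Next I would apply the two sign constraints intrinsic to Mamba. Because $\bar{\BoldDelta}_t$ is the output of a softplus, its diagonal entries satisfy $\bar{\Delta}_{t,ii} > 0$; and by the S4D-style parametrization $\BoldA = -\exp(\cdot)$ used in \cite{gu2023mamba}, every diagonal entry satisfies $A_{ii} < 0$. Hence each summand $\bar{\Delta}_{t,ii} A_{ii}$ is negative, so $\log \norm{\prod_{t=1}^{T} \bar{\BoldA}_t} = \max_i \sum_{t=1}^{T}\bar{\Delta}_{t,ii} A_{ii} \leq 0$ for every $T$; dividing by $T$ and passing to the limit gives $\mle{} \leq 0$. (This argument in fact shows the stronger non-asymptotic statement that the state-to-state propagator is non-expansive, $\norm{\prod_{t} \bar{\BoldA}_t} \leq 1$, since each $\bar{\BoldA}_t$ has diagonal entries in $(0,1]$.)

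Finally I would account for the perturbation being applied jointly to $(\Boldx_0, \Boldu_1)$ rather than to the state alone. Since softplus, $\exp$, and the linear projections are all $C^1$, the perturbed first step $\Boldx_1' = F_{\theta}(\Boldx_0 + \varepsilon, \Boldu_1 + \varepsilon)$ satisfies $\norm{\Boldx_1' - \Boldx_1} \in \bigo{\varepsilon}$ for a local Lipschitz constant (here one also uses that $\bar{\BoldB}_1 = \BoldA^{-1}(\bar{\BoldA}_1 - \I)\BoldB_1$ is finite, which is immediate from $\bar{\BoldA}_1$ having entries in $(0,1]$). For $t \geq 2$ the inputs $\Boldu_t$ are unperturbed, so $\Boldx_t' - \Boldx_t = \bar{\BoldA}_t(\Boldx_{t-1}' - \Boldx_{t-1})$, and unrolling gives $\Boldx_N' - \Boldx_N = \big(\prod_{t=2}^{N}\bar{\BoldA}_t\big)(\Boldx_1' - \Boldx_1)$; taking norms and applying the Lyapunov/non-expansiveness bound yields $\max | F_{\theta}^N(\Boldx_0,\Boldu_1) - F_{\theta}^N(\Boldx_0+\varepsilon,\Boldu_1+\varepsilon)| \in \bigo{\varepsilon \exp(N\zeta)}$ with $\zeta = \mle{} \leq 0$.

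I expect the main obstacle to be this last step: making the joint $(\Boldx_0,\Boldu_1)$ perturbation rigorous — justifying the $\bigo{\varepsilon}$ first-step estimate and the finiteness of $\bar{\BoldB}_1$ — and cleanly bridging the finite-$N$ deviation to the asymptotic Lyapunov-exponent statement, since the cited $\bigo{\varepsilon\exp(N\mle{})}$ bound is normally stated for state perturbations. Everything upstream (the Jacobian identification and the sign computation) is short once diagonality, $A_{ii}<0$, and $\bar{\Delta}_{t,ii}>0$ are in place.
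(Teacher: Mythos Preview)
Your proposal is correct and follows essentially the same route as the paper: identify $\partial \Boldx_t/\partial \Boldx_{t-1} = \bar{\BoldA}_t = \exp(\bar{\BoldDelta}_t\BoldA)$, use diagonality to collapse the product into $\exp\!\big(\sum_t \bar{\BoldDelta}_t\BoldA\big)$, and conclude $\mle{}\leq 0$ from the sign constraints $A_{ii}\leq 0$ (the paper obtains this from the log-space/negative-exponentiation parametrization of $\BoldA$) and $\bar{\Delta}_{t,ii}\geq 0$ (softplus). Your final step---absorbing the joint $(\Boldx_0,\Boldu_1)$ perturbation into an $\bigo{\varepsilon}$ first-step error and then propagating through the non-expansive $\prod_t\bar{\BoldA}_t$---is actually more careful than the paper, which simply declares that showing $\mle{}\leq 0$ suffices and invokes the cited growth estimate directly.
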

\begin{proof}
  Firstly, we note that within the \mambablock{}, $\BoldA$ is stored in log-space followed by a negative exponentiation prior to use.  Thus, $\BoldA \in \negreals{}^{d \times d}$, where $\negreals{}$ is the set of non-positive real numbers.

  Recall that for the maximum deviation, we have:
\begin{align*}
   \max | F_{\theta}^N( \Boldx_{0}, \Boldu_1) - F_{\theta}^N( \Boldx_{0} + \varepsilon, \Boldu_1+ \varepsilon)| \in \bigo{|\varepsilon| \exp{(N \mle{} )}}.
\end{align*}
where the maximal Lyapunov exponent $\mle{}$ is defined as:
  \begin{align*}
    \lambda_{\texttt{max}} &\coloneq \lim_{T \to \infty} \frac{1}{T}\log \norm{\prod_{t = 0}^T \frac{\partial \Boldx_t}{\partial \Boldx_{t-1}}},
  \end{align*}
  and $\norm{}$ denotes the spectral norm for matrices.

  Thus, to complete the proof, it suffices to show that $\mle{} \leq 0$.  Recall that $\BoldA$ and $\bar{\BoldDelta}_t$ are diagonal.  From Equation~\ref{appendix:equation:line1:stateSpace}, we thus have
  \begin{align*}
    \lambda_{\texttt{max}} &=  \lim_{T \to \infty} \frac{1}{T}\log \norm{\prod_{t = 0}^T \frac{\partial \Boldx_t}{\partial \Boldx_{t-1}}}\\
    &= \lim_{T \to \infty} \frac{1}{T}\log \norm{\prod_{t = 0}^T \exp{(\bar{\BoldDelta}_t \BoldA) }}\\
    &= \lim_{T \to \infty} \frac{1}{T}\log \norm{\exp{\sum_{t = 0}^T(\bar{\BoldDelta}_t \BoldA) }}
  \end{align*}

  Let $i$ be the dimension which corresponds to the output of the spectral norm, i.e., $i = \mbox{argmax}_{j = 1, \dots, d} \{ \exp{\sum_{t = 0}^T(\bar{\BoldDelta}_t[j,j] \BoldA[j,j])} \}$.  We thus have
  \begin{align*}
    \lambda_{\texttt{max}} &= \lim_{T \to \infty} \frac{1}{T}\log \norm{\exp{\sum_{t = 0}^T(\bar{\BoldDelta}_t \BoldA) }}\\
      &= \lim_{T \to \infty} \frac{1}{T}\log \exp{\sum_{t = 0}^T(\bar{\BoldDelta}_t[i,i] \BoldA[i,i]) }\\
      &= \BoldA[i,i] \lim_{T \to \infty} \frac{1}{T}\sum_{t = 0}^T \bar{\BoldDelta}_t[i,i] 
  \end{align*}

  $\BoldA[i,i]$ is non-positive and $\lim_{T \to \infty} \frac{1}{T}\sum_{t = 0}^T \bar{\BoldDelta}_t[i,i] \geq 0$, since $\bar{\BoldDelta}_t[i,i] \in \plusreals{} \; \forall t$.  Thus, $\lambda_{\mathtt{max}} \leq 0$.
\end{proof}

\begin{theorem*}
  Let $(\Boldx_{t-1},\Boldu_t)$ be the latent state and input at an arbitrary time $t \in [1,T]$ within a \mambablock{}.  Then small changes $(\Boldx_{t-1} + \varepsilon, \Boldu_t + \varepsilon)$ produce deviations which are exponentially decreasing over discrete-time, i.e.,  $\max | F_{\theta}^N( \Boldx_{0}, \Boldu_1) - F_{\theta}^N( \Boldx_{0} + \varepsilon, \Boldu_1+ \varepsilon)| \in \bigo{|\varepsilon| \exp{(N \zeta )}}$, for some scalar $\zeta \leq 0$.
\end{theorem*}
\begin{proof}
  Let $\tau(t)$ be a function that maps time values such that $\tau(t) \in [1, T-t]$ and $\tau(t) = 1, \tau(t+1)=2, \dots, \tau(t+T) = T-t$.  Then $\BoldB_{\tau(t)}, \BoldC_{\tau(t)}, \BoldDelta_{\tau(t)}$ define a new \mambablock{} with inputs $\Boldu_{\tau(t)}, \dots, \Boldu_{\tau(t+T)}$ and subsequent recurrent states $\Boldx_{\tau(t)}, \dots, \Boldx_{\tau(t+T)}$.  Applying Lemma~\ref{lemma:mamba_mle} to this \mambablock{} with $(\Boldx_{\tau(t)-1}, \Boldu_{\tau(t)})$ completes the proof.
\end{proof}

\section{Mamba stable outputs proof}\label{appendix:stableOutputsProof}
 \begin{theorem*}
   Assume $(\Boldx_{t-1} + \varepsilon,\Boldu_t + \varepsilon)$ produce deviations which are exponentially non-increasing over discrete-time.  Then small changes to the output $\Boldy_t$ are also exponentially non-increasing over discrete time.
 \end{theorem*}

\begin{proof}
Recall that $\Boldx_T = F_{\theta}^T( \Boldx_{0}, \Boldu_1)$.  Furthermore, recall from Equations~\ref{equation:line1:stateSpace} and ~\ref{equation:line2:stateSpace}, $\Boldy_t = \BoldC_t \Boldx_t$, where $\BoldC_t$ is diagonal.

Let
\begin{align*}
  \Boldy_T &= G_{\theta}^T( \Boldx_{0}, \Boldu_1) = \BoldC_T \Boldx_T = \BoldC_T F_{\theta}^T( \Boldx_{0}, \Boldu_1).
\end{align*}

Consider $\varepsilon$-close inputs $(\Boldx_{t-1},\Boldu_t)$ and $(\Boldx_{t-1} + \varepsilon,\Boldu_t + \varepsilon)$, and their respective outputs $\Boldy_t$ and $\Boldy_t'$.  Assume $(\Boldx_{t-1} + \varepsilon,\Boldu_t + \varepsilon)$ produce deviations which are exponentially non-increasing over discrete-time.  That is, $\max | F_{\theta}^N( \Boldx_{t-1}, \Boldu_t) - F_{\theta}^N( \Boldx_{t-1} + \varepsilon, \Boldu_t+ \varepsilon)| \in \bigo{|\varepsilon| \exp{(N \zeta )}}$, for some scalar $\zeta \leq 0$.

We thus have
\begin{align*}
  \max | \Boldy_t - \Boldy_t'| &= \max |G_{\theta}^N(\Boldx_{t-1},\Boldu_t) - G_{\theta}^N(\Boldx_{t-1} + \varepsilon,\Boldu_t + \varepsilon)|\\
  &=\max |\BoldC_N F_{\theta}^N(\Boldx_{t-1},\Boldu_t) - \BoldC_N F_{\theta}^N(\Boldx_{t-1} + \varepsilon,\Boldu_t + \varepsilon)|\\
  &\propto \max |F_{\theta}^N(\Boldx_{t-1},\Boldu_t) - F_{\theta}^N(\Boldx_{t-1} + \varepsilon,\Boldu_t + \varepsilon)|,
\end{align*}
where proportionality follows due to the diagonality of $\BoldC_N$ and the vector-absolute value.  Thus,
\begin{align*}
\max | G_{\theta}^N( \Boldx_{t-1}, \Boldu_t) - G_{\theta}^N( \Boldx_{t-1} + \varepsilon, \Boldu_t+ \varepsilon)| &\in \bigo{|\varepsilon| \exp{(N \zeta )}}
\end{align*}
\end{proof}

 \section{Mamba stable AMP and/or PEFT dynamics proof}\label{appendix:stableAmpProof}
 For an arbitrary \mambablock{}, let $\Boldx_0, \dots, \Boldx_T$ and $\Boldu_1, \dots, \Boldu_T$ be the trajectory of latent and input states under full-precision fine-tuning.  During a forward pass, automatic mixed-precision (AMP) saves time and memory by computing forward activations in half-precision (\halfp{} or \bhalfp{}).  During a backward pass, AMP computes gradients in half-precision and up-casts to full-precision prior to updating.  MFPT thus introduces changes to the inputs $\Boldu_1, \dots, \Boldu_T$ (which are passed through a \texttt{Swish}) and latent states $\Boldx_0, \dots, \Boldx_T$ through $\bar{\BoldDelta}_t$ (which is passed through a \texttt{softplus}), up/downcasting of parameter updates, and the changes to the aforementioned changes to the input states.

 Potentially in concert with MPFT, PEFT via LoRA induces changes by learning low-rank matrices $\hat{\BoldA}_t, \hat{\BoldB}_t, \hat{\BoldC}_t$
\begin{align*}
  \hat{\Boldx}_t &= (\bar{\BoldA}_t + \hat{\BoldA}_t)\Boldx_{t-1} + (\bar{\BoldB}_t  + \hat{\BoldB}_t) \Boldu_{t}\\
  \hat{\Boldy}_t & = (\BoldC_t + \hat{\BoldC}_t)\Boldx_t
\end{align*}

Let $\Boldx_0', \dots, \Boldx_T'$ and $\Boldu_1', \dots, \Boldu_T'$ be the trajectory of latent and input states under MPFT and/or PEFT.  We thus have the following
 \begin{theorem*}
Let $\epsilon_t^u = \max | \Boldu_t - \Boldu_t'|$, i.e., the maximum, positive-scalar difference between the $t$th MPFT and full fine-tuning input state.  Similarly, let $\epsilon_t^x = \max | \Boldx_t - \Boldx_t' | $ and $\epsilon^* = \max \bigcup_{t = 0}^T \{\epsilon_t^u, \epsilon_t^x \}$ (where we define $\Boldu_0 = 0$).  We thus have
\begin{align*}
  \max | F_{\theta}^T( \Boldx_{0}, \Boldu_1) - F_{\theta}^T( \Boldx_{0}', \Boldu_1')| \in \bigo{\epsilon^* \exp{(T \zeta )}},
\end{align*}
for some scalar $\zeta \leq 0$.  Thus, differences introduced by AMP and/or PEFT for Mamba models do not exponentially compound over discrete-time.
 \end{theorem*}

 \begin{proof}
Consider the $\bm{\epsilon}$-vector definition of the Lyapunov exponent: given $\mle{}$, $\bm{\epsilon} \in \reals^d$, and two initial values $( \Boldx_{0}, \Boldu_1)$ and $( \Boldx_{0} + \bm{\epsilon}, \Boldu_1 + \bm{\epsilon})$, the maximum deviation between these points grows as:
 {\small
\begin{align*}
   \max | F_{\theta}^T( \Boldx_{0}, \Boldu_1) - F_{\theta}^T( \Boldx_{0} + \bm{\epsilon}, \Boldu_1+ \bm{\epsilon})| \in \bigo{\norm{\bm{\epsilon}} \exp{(T \mle{} )}}.
\end{align*}
 }
 where $\norm{}$ denotes the L2-norm for vectors.

 Let $f(\cdot)$ be a function which maps a vector to the set of its elements, e.g., for $\bm{z} \in \reals^d, f(\bm{x}) = \{ x_1, \dots, x_d\}$.  Thus, note that for $a^* = \max \{|a| : \forall a \in f(\bm{\epsilon}) \}$, $\norm{\bm{\epsilon}} \leq a^* \sqrt{d}$ and $\bigo{\norm{\bm{\epsilon}} \exp{(T \mle{} )}} \in \bigo{a^* \exp{(T \mle{} )}}$.
 
   Let $\mathcal{E} = \bigcup_{t = 0}^T \{\Boldu_t - \Boldu_t', \Boldx_t - \Boldx_t'\}$ (where we define $\Boldu_0 = 0$), and $\epsilon^* = \max \{|a| : \forall a \in \bm{\epsilon}, \forall \bm{\epsilon} \in \mathcal{E}\}$.  $\forall \; \epsilon \in \mathcal{E}$, we thus have
  \begin{align*}
    &\max | F_{\theta}^T( \Boldx_{0}, \Boldu_1) - F_{\theta}^T( \Boldx_{0} + \epsilon, \Boldu_1 + \epsilon)| \in \bigo{\norm{\bm{\epsilon}} \exp{(T \mle{} )}}  \subseteq \bigo{\epsilon^* \exp{(T \mle{})}},\\
  \end{align*}
  
Furthermore,  $\{\Boldu_t - \Boldu_t', \Boldx_t - \Boldx_t'\} \subseteq \mathcal{E}$, and thus
  \begin{align*}
&\max | F_{\theta}^T( \Boldx_{0}, \Boldu_1) - F_{\theta}^T( \Boldx_{0} -\Boldx_0, \Boldu_1 - \Boldu_1')| \in \bigo{\epsilon^* \exp{(T \mle{})}}
  \end{align*}
  
  Let $\tau(t)$ be a function that maps time values such that $\tau(t) \in [1, T-t]$ and $\tau(t) = 1, \tau(t+1)=2, \dots, \tau(t+T) = T-t$.  Then $\BoldB_{\tau(t)}, \BoldC_{\tau(t)}, \BoldDelta_{\tau(t)}$ define a new \mambablock{} with inputs $\Boldu_{\tau(t)}, \dots, \Boldu_{\tau(t+T)}$ and subsequent recurrent states $\Boldx_{\tau(t)-1}, \Boldx_{\tau(t)}, \dots, \Boldx_{\tau(t+T)}$.  $\forall t \in \{1, \dots, T\}$, $\{\Boldu_t - \Boldu_t', \Boldx_t - \Boldx_t'\} \subseteq \mathcal{E}$, so that
  \begin{align*}
    \max | F_{\theta}^{T-t}( \Boldx_{\tau(t)}-1, \Boldu_{\tau(t)}) - F_{\theta}^{T-t}( \Boldx_{\tau(t)-1}', \Boldu_{\tau(t)})| \in \bigo{\epsilon^* \exp{((T-t) \mle{})}}.
  \end{align*}.

  Thus, we have
  \begin{align*}
  \max | F_{\theta}^T( \Boldx_{0}, \Boldu_1) - F_{\theta}^T( \Boldx_{0}', \Boldu_1')| \in \bigo{\epsilon^* \exp{(T \mle{} )}},
\end{align*}
where $\mle{} \leq 0$ by Lemma~\ref{lemma:mamba_mle}.

\end{proof}

 \clearpage 
 \section{\mambablock{} Maximum Deviation results}\label{appendix:epsilon}
 \subsection{$\varepsilon$ varying initial conditions}
 \begin{figure*}[htbp!]
  \centering
  \includegraphics[trim=0.11in 0.1in 0.1in 0.10in, clip=true,scale=0.5]{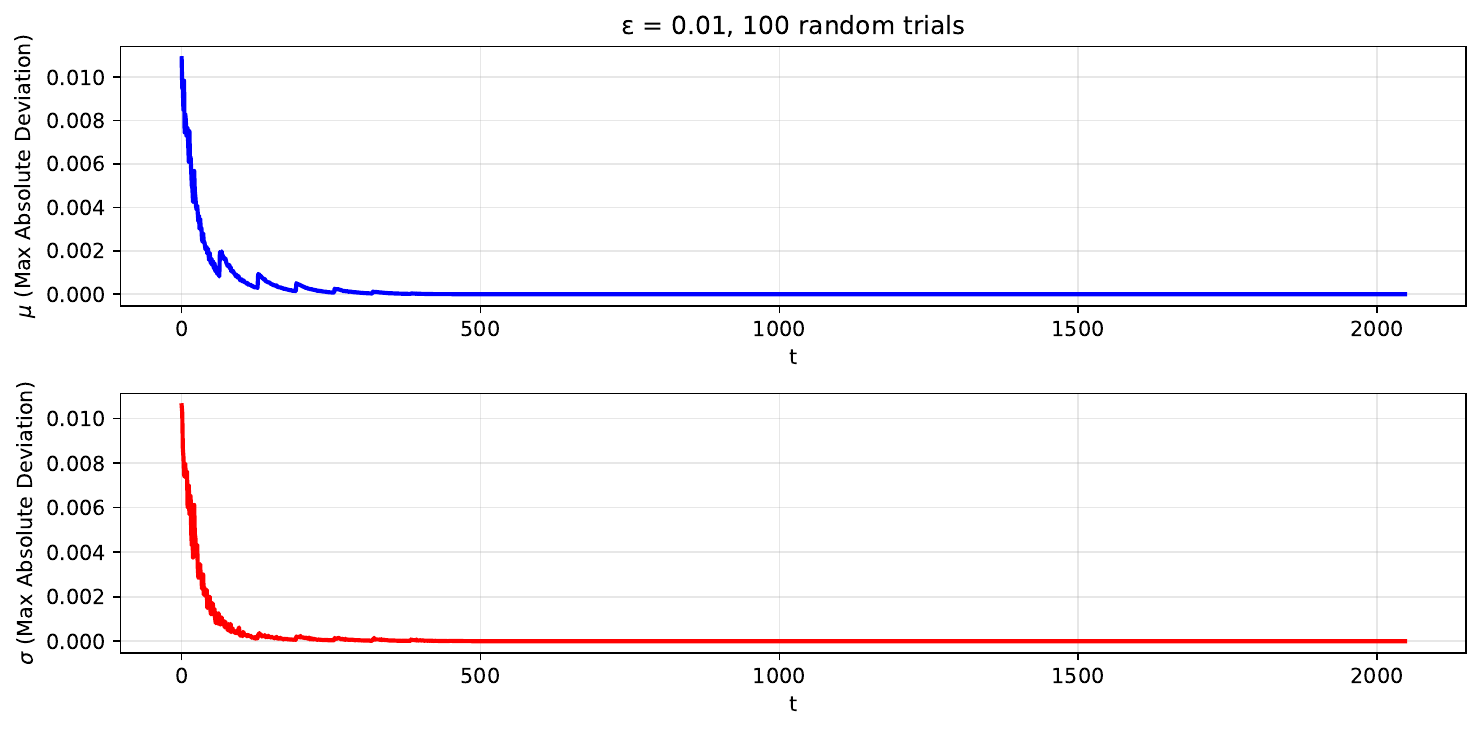}
  \caption{Perturbing the initial states of random \mambablock{}s produces output state deviations which exponentially decrease over discrete time.
    $\mu$ denotes the maximum absolute deviation at discrete time $t$ averaged over all trials, while $\sigma$ denotes the respective standard deviation at each discrete time.}
  \label{fig:mambablockDynamics_eps2}
\end{figure*}

\begin{figure*}[htbp!]
  \centering
  \includegraphics[trim=0.11in 0.1in 0.1in 0.10in, clip=true,scale=0.5]{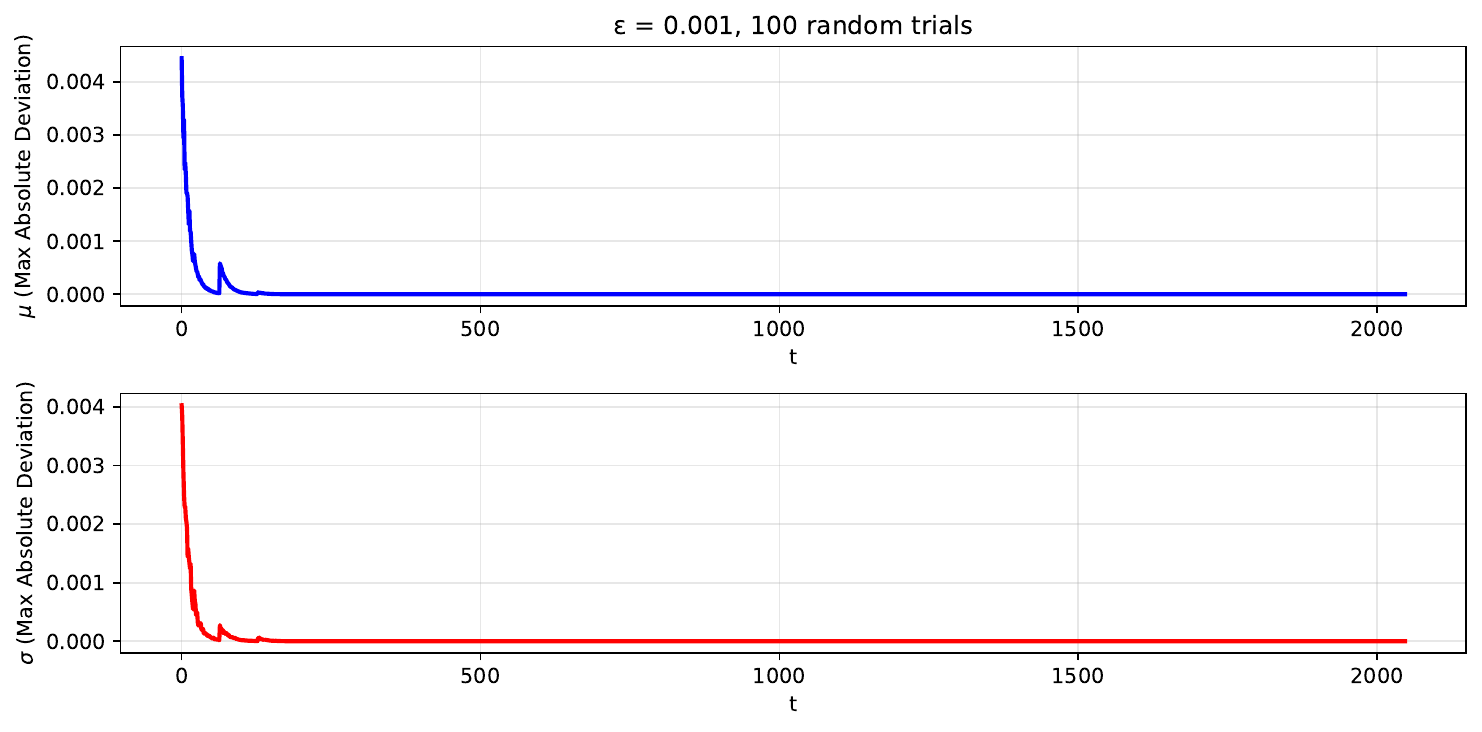}
  \caption{Perturbing the initial states of random \mambablock{}s produces output state deviations which exponentially decrease over discrete time.
    $\mu$ denotes the maximum absolute deviation at discrete time $t$ averaged over all trials, while $\sigma$ denotes the respective standard deviation at each discrete time.}
  \label{fig:mambablockDynamics_eps3}
\end{figure*}
 \clearpage 
 
\section{Previous SSM stability results}\label{appendix:previousStabilityResults}
Previous SSM stability results~\citep{goel2022s, orvieto2023resurrecting} consider the following \emph{linear time-invariant} (LTI) equations used in S4 models,
\begin{align}
  \Boldx_t &= \bar{\BoldA} \Boldx_{t-1} + \bar{\BoldB} \Boldu_{t}\label{appendix:eq1}\\
  \Boldy_t & = \BoldC \Boldx_t.\label{appendix:eq2},
\end{align}

which can be unrolled to produce
\begin{align}
\Boldx_k &= \sum_{j = 0}^{k} \bar{\BoldA}^j \bar{\BoldB}\Boldu_{j}.\label{appendix:unrollingEquation}
\end{align}

By eigendecomposition,
\begin{align*}
  \bar{\BoldA}^j &= (\BoldP \BoldLambda \BoldP^{-1}) ^ j = \BoldP \BoldLambda^j \BoldP^{-1}.
\end{align*}
In ~\citep{orvieto2023resurrecting}, the derived sufficient condition for stability is thus $|\alpha_i| \leq 1,$ for $\BoldLambda = \diag{[\alpha_0, \alpha_1, \dots, \alpha_{d}]}$.  Relatedly, in ~\citep{goel2022s}, stability is enforced by constraining $\BoldLambda$ to be Hurwitz, in which case the real part of each diagonal entry $\alpha_i$ is negative, i.e., $\operatorname{Re}(\alpha_i) < 0,$ for $\BoldLambda = \diag{[\alpha_0, \alpha_1, \dots, \alpha_{d}]}$.

Both results critically rely on the LTI property of Equations~\ref{appendix:eq1} and ~\ref{appendix:eq2} to derive Equation~\ref{appendix:unrollingEquation}.  However, Mamba's state-space equations are \emph{linear time-varying} (LTV), and thus Equation~\ref{appendix:unrollingEquation} does not hold.  Thus, previous SSM stability results do not directly apply to Mamba models.

\section{Experimental Details}\label{appendix:experimentalDetails}
\begin{table}[htbp!]
  \caption{Learning rate and LoRA dimension $r$ values}
  \label{table:pythiaMamba_lrLora}
  \centering
  \tabcolsep=0.15cm
  \begin{tabular}{cccccc}
    \toprule
    Mamba size & Mamba-2 size & Pythia size & learning rate & Mamba/Pythia LoRA $r$ & Mamba-2 LoRA $r$\\
    \hline
    \texttt{130M} & \texttt{130M} &\texttt{160M} &  1.0e-5 & 8 & 8\\
    \hline    
    \texttt{370M} & \texttt{370M} &\texttt{410M} & 5.0e-5 & 16 & 16\\
    \hline    
    \texttt{790M} & \texttt{780M} &\texttt{1B} & 1.0e-6 & 32 & 32\\
    \hline    
    \texttt{1.4B} & \texttt{1.3B} &\texttt{1.4B} & 5.0e-6 & 64 & 64\\
    \hline    
    \texttt{2.8B} & \texttt{2.7B} &\texttt{2.8B} & 5.0e-7 & 128 & 64\\    
    \bottomrule
  \end{tabular}
\end{table}

All model checkpoints were evaluated on all benchmarks and few-shot settings using the LM evaluation harness from Eleuther AI~\citep{eval-harness}, version {\tt 0.4.2}.  Pythia and Mamba {\tt Huggingface} checkpoints were used for all inference and fine-tuning experiments, e.g., \verb|EleutherAI/pythia-160m| and \verb|state-spaces/mamba-130m-hf| for the smallest respective models.  All fine-tuning experiments were run using package versions {\tt Transformers 4.40.0.dev0}, {\tt Accelerate 0.28.0}, {\tt TRL 0.8.1}, {\tt PyTorch 2.2.1+cu121}, and {\tt PEFT 0.10.0}.  All Mamba-2 models were run using {\tt mamba-ssm v2.2.2} using {\tt Huggingface} checkpoints, e.g., \verb|state-spaces/mamba-130m| for the smallest model.

For MPFT, {\tt Flash Attention 2.0}~\citep{dao2022flashattention} via {\tt flash\_attn 2.5.7} was used for Pythia models.  For \halfp{} and \bhalfp{} inference results, Flash Attention 2.0 was used for both Pythia and OLMo models.  For OLMo results, the 336B-token checkpoint was used by specifying \texttt{revision=step80000-tokens336B}.

All Alpaca and OpenHermes fine-tuning experiments used the following training recipe (adapted from \citep{tunstall2023zephyr}): \texttt{AdamW\_torch} optimizer, \texttt{cosine annealing} schedule, no gradient accumulation, maximum norm of 1.0 for gradient clipping, and no warmup steps.  Training epochs used for all Alpaca and OpenHermes experiments were three and one, respectively.  For both Pythia and Mamba models, the learning rate and LoRA dimension $r$ were scaled to improve performance of smaller models (per-model values listed in Table~\ref{table:pythiaMamba_lrLora}).

For \sll{}, targeted Mamba layers were \verb|{x_proj, embeddings, in_proj, out_proj}|; \verb|x_proj| is a large \mambablock{} memory buffer which, when targeted by LoRA, adapts parameters $\BoldDelta_{t}, \BoldB_{t}$, and $\BoldC_{t}$.  Pythia targeted \sll{} layers were \verb|{dense, embed_in, query_key_value, dense_h_to_4h,dense_4h_to_h}|, chosen to balance performance across model sizes.

All experiments were run using a single-GPU Nvidia A10G (24 GB total memory).  For Pythia, Mamba, and Mamba-2 \all{} experiments in Figure~\ref{fig:icl_plots}, all models followed the same training and PEFT recipes, save for Mamba-2 \texttt{2.7B} which required a LoRA $r$ dimension of 64 to fit in A10G memory.

The Alpaca dataset is freely available for download at \url{https://huggingface.co/datasets/tatsu-lab/alpaca} under open-source license \verb|CC-by-NC 4.0|.  The OpenHermes dataset is freely available for download at \url{https://huggingface.co/datasets/teknium/OpenHermes-2.5} under open-source license \verb|MIT, Apache 2.0, CC|.  
\begin{table}[htbp!]
  \vspace{-0.2in}
  \caption{{\small {\bf Pretrained model performance}.
    Model checkpoints were evaluated on all benchmarks and few-shot settings using the LM evaluation harness from Eleuther AI~\citep{eval-harness}.  LAMBADA zero-shot is more effective for the model sizes considered (further discussed in~\citep{xie2021explanation, brown2020language}) and thus excluded from few-shot performance averages.  Highlighted in bold is the top-performing few-shot learner per benchmark and model grouping.}}
  \label{table:mamba_pretrained_icl_appendix_le450}
  \centering
\tabcolsep=0.11cm  
{\scriptsize
  \begin{tabular}{ccccccccccc}
    \toprule
    Model & $N$-shot & LAMBADA & LAMBADA & HellaSwag & PIQA & Arc-E & Arc-C & WinoGrande & 0-shot incr.\\
    & &ppl $\downarrow$ & acc $\uparrow$ & acc $\uparrow$  & acc $\uparrow$  & acc $\uparrow$  & acc $\uparrow$  & acc $\uparrow$ & Mean \% $\uparrow$\\
    \hline
    \multirow{4}{*}{\shortstack[c]{Mamba\\130M}}
    & 0 & {\bf 16.0} & {\bf 44.3} & 35.2 & 64.7 & 48.0 & 24.3 & 52.6 & --\\
    & 1 & 19.3 & 38.2 & 35.1 & 64.6 & 47.0 & 23.5 & 50.8 & -1.9\\
    & 3 & 23.1 & 35.2 & 35.0 & 65.1 & 49.1 & 24.0 & 51.0 & -0.4\\
    & 5 & 24.4 & 36.2 & 34.9 & 64.9 & 49.1 & 23.7 & 50.0 & -1.2\\    
    \hdashline
    \multirow{4}{*}{\shortstack[c]{Mamba-2\\130M}}
    & 0 & 16.8 & 43.9 & 35.3 & 64.9 & 47.4 & 24.2 & 52.6 & --\\
    & 1 & 20.6 & 37.9 & 34.9 & 64.1 & 46.9 & 23.1 & 51.3 & -2.0\\
    & 3 & 24.3 & 35.1 & 34.9 & 64.4 & {\bf 49.0} & 24.7 & {\bf 52.9} & {\bf 0.9}\\
    & 5 & 26.5 & 34.9 & 34.6 & 64.4 & 48.6 & {\bf 24.8} & 51.7 & 0.2\\  
    \hdashline
    \multirow{4}{*}{\shortstack[c]{Pythia\\160M}}
    & 0 & 38.2 & 32.7 & 30.2 & 61.8 & 43.4 & 23.8 & 51.0 & --\\
    & 1 & 47.2 & 28.2 & 30.6 & 62.2 & 43.4 & 23.7 & 49.3 & -0.4\\
    & 3 & 63.7 & 24.7 & 30.5 & 61.9 & 44.8 & 22.9 & 51.3 & 0.1\\
    & 5 & 66.3 & 25.3 & 30.4 & 62.6 & 43.4 & 23.1 & 50.8 & -0.2 \\
    \hline
    \multirow{4}{*}{\shortstack[c]{Mamba\\370M}}
    & 0 & 8.1 & 55.6 & 46.5 & 69.5 & 54.9 & 27.8 & 55.3 & --\\
    & 1 & 9.7 & 49.8 & 45.9 & 69.3 & 57.4 & 26.5 & 54.7 & -0.5\\
    & 3 & 10.9 & 48.4 & 46.2 & 69.5 & 58.8 & 28.4 & 53.8 & 1.2\\    
    & 5 & 11.4 & 48.6 & 46.2 & 69.4 & 58.3 & 28.0 & {\bf 55.9} & 1.5\\
    \hdashline
    \multirow{4}{*}{\shortstack[c]{Mamba-2\\370M}}
    & 0 & {\bf 8.0} & {\bf 55.9} & {\bf 46.9} & {\bf 70.5} & 54.8 & 26.7 & 55.4 &--\\
    & 1 & 9.8 & 50.3 & 46.4 & {\bf 70.5} & 56.5 & 26.8 & 54.2 & 0.0\\
    & 3 & 11.3 & 48.5 & 46.6 & 70.2 & {\bf 59.0} & 26.9 & 54.3 & 1.0\\
    & 5 & 12.5 & 46.6 & 46.7 & 70.3 & 58.5 & 28.2 & 53.3 & 1.5\\  
    \hdashline
    \multirow{4}{*}{\shortstack[c]{Pythia\\410M}}
    & 0 & 10.8 & 51.5 & 40.6 & 66.9 & 52.0 & 24.1 & 53.4 & --\\
    & 1 & 12.3 & 47.1 & 40.5 & 68.0 & 53.8 & 25.6 & 52.4 & 1.8\\
    & 3 & 14.4 & 43.2 & 40.9 & 67.9 & 55.1 & 26.9 & 54.0 & {\bf 4.2}\\
    & 5 & 14.6 & 44.1 & 40.8 & 68.1 & 54.6 & 26.6 & 53.4 & 3.5\\
    \hline
    \multirow{4}{*}{\shortstack[c]{Mamba\\790M}}
    & 0 & 6.0 & 61.4 & {\bf 55.1} & 72.3 & 61.2 & 29.5 & 55.9 & --\\
    & 1 & 7.1 & 55.9 & 54.5 & 72.4 & 63.0 & 30.2 & 56.9 & 1.3\\
    & 3 & 8.1 & 54.5 & 54.2 & 72.3 & 63.5 & 31.4 & 57.1 & 2.2\\
    & 5 & 8.8 & 52.9 & 54.6 & {\bf 72.6} & {\bf 64.4} & 31.9 & 57.2 & {\bf 3.1}\\    
    \hdashline
    \multirow{4}{*}{\shortstack[c]{Mamba-2\\780M}}
    & 0 & {\bf 5.9} & {\bf 61.7} & 54.9 & 72.0 & 61.0 & 28.5 & {\bf 60.2} & -- \\
    & 1 & 7.1 & 55.5 & 54.7 & 72.4 & 62.3 & 32.1 & 57.1 & 1.9\\
    & 3 & 8.6 & 53.3 & 54.7 & 72.5 & 62.8 & {\bf 32.3} & 57.8 & 2.5\\
    & 5 & 9.9 & 51.4 & 55.2 & 72.1 & 62.8 & 32.2 & 56.8 & 2.2\\ 
    \hdashline
    \multirow{4}{*}{\shortstack[c]{Pythia\\1B}}
    & 0 & 7.9 & 56.3 & 47.2 & 70.7 & 57.0 & 27.0 & 53.4 & --\\
    & 1 & 8.0 & 51.8 & 47.3 & 70.7 & 57.1 & 28.2 & 53.4 & 1.0\\
    & 3 & 10.5 & 48.2 & 47.5 & 71.2 & 59.2 & 28.0 & 54.3 & 2.2\\
    & 5 & 10.9 & 48.4 & 47.3 & 71.4 & 58.7 & 28.4 & 53.1 & 1.9\\
    \hline
    \multirow{4}{*}{\shortstack[c]{Mamba\\1.4B}}
    & 0 & {\bf 5.0} & 64.9 & 59.2 & 74.1 & 65.5 & 32.9 & 61.3 & --\\
    & 1 & 5.8 & 60.6 & 58.2 & {\bf 74.7} & 64.5 & 33.0 & 60.9 & -0.5\\    
    & 3 & 6.6 & 58.9 & 58.9 & 73.6 & 66.1 & 34.5 & 60.9 & 0.7\\
    & 5 & 7.0 & 58.3 & 59.0 & 74.1 & 66.4 & 35.5 & 60.4 & 1.5\\    
    \hdashline
    \multirow{4}{*}{\shortstack[c]{Mamba-2\\1.3B}}
    & 0 & {\bf 5.0} & {\bf 65.6} & 60.0 & 73.2 & 64.2 & 33.1 & 61.1 & --\\
    & 1 & 6.0 & 60.1 & 59.4 & 73.1 & 65.6 & 35.3 & 59.4 & 1.0\\
    & 3 & 6.7 & 58.6 & 60.1 & 73.4 & {\bf 66.5} & 35.4 & {\bf 61.9} & 2.5\\
    & 5 & 7.0 & 58.6 & {\bf 60.2} & 73.7 & 66.5 & {\bf 35.9} & 61.4 & 2.7\\  
    \hdashline
    \multirow{4}{*}{\shortstack[c]{Pythia\\1.4B}}
    & 0 & 6.1 & 61.7 & 52.1 & 70.9 & 60.5 & 28.5 & 57.4 & --\\
    & 1 & 7.0 & 56.3 & 52.1 & 71.4 & 62.0 & 29.5 & 57.5 & 1.4\\
    & 3 & 7.9 & 54.4 & 52.6 & 70.9 & 63.9 & 31.1 & 56.8 & 2.9\\
    & 5 & 8.0 & 54.4 & 52.8 & 71.0 & 63.2 & 31.3 & 57.8 & {\bf 3.3}\\
    \hline  
    \multirow{4}{*}{\shortstack[c]{Mamba\\2.8B}}
    & 0 & 4.2 & 69.1 & 66.1 & 75.2 & 69.6 & 36.4 & 63.3 & --\\    
    & 1 & 5.0 & 63.7 & 65.6 & 75.6 & 69.9 & 37.1 & 63.9 & 0.6\\
    & 3 & 5.5 & 62.8 & 65.5 & 75.3 & 70.8 & 38.1 & 65.1 & 1.7\\
    & 5 & 5.7 & 62.5 & 66.1 & 76.1 & 70.9 & 38.1 & 64.6 & 2.0\\    
    \hdashline
    \multirow{4}{*}{\shortstack[c]{Mamba-2\\2.7B}}
    & 0 & {\bf 4.1} & {\bf 69.6} & 66.6 & {\bf 76.4} & 69.5 & 36.3 & 63.9 &  --\\
    & 1 & 4.8 & 65.1 & 65.9 & 75.1 & 70.0 & 38.6 & 65.1 & 1.3\\
    & 3 & 5.3 & 63.9 & 66.8 & 75.2 & {\bf 71.9} & 41.0 & 64.1 & 3.1\\
    & 5 & 5.7 & 62.3 & {\bf 67.1} & 75.3 & 70.7 & {\bf 41.2} & {\bf 65.9} & {\bf 3.6}\\  
    \hdashline
    \multirow{4}{*}{\shortstack[c]{Pythia\\2.8B}}
    & 0 & 5.0 & 64.7 & 59.3 & 73.9 & 64.2 & 32.9 & 59.8 & --\\
    & 1 & 5.7 & 60.9 & 59.4 & 73.8 & 66.8 & 34.8 & 59.0 & 1.7\\
    & 3 & 6.2 & 59.1 & 59.9 & 74.7 & 67.4 & 34.9 & 60.8 & 2.9\\
    & 5 & 6.5 & 59.1 & 60.2 & 74.5 & 67.1 & 35.0 & 61.3 & 3.1\\
    \bottomrule  
  \end{tabular}  
}
\end{table}

\begin{table}[htbp!]
  \vspace{-0.2in}
  {\small \caption{{\bf Instruction tuned model performance}.
    Model checkpoints were evaluated on all benchmarks and few-shot settings using the LM evaluation harness from Eleuther AI~\citep{eval-harness}.  LAMBADA zero-shot is more effective for the model sizes considered (further discussed in~\citep{xie2021explanation, brown2020language}) and thus excluded from few-shot performance averages.  Highlighted in bold is the top-performing few-shot learner per benchmark and model grouping.}}
  \label{table:mamba_instruction_tuned_icl_appendix_le450}
  \centering
\tabcolsep=0.11cm  
{\scriptsize
  \begin{tabular}{ccccccccccc}
    \toprule
    Model & $N$-shot & LAMBADA & LAMBADA & HellaSwag & PIQA & Arc-E & Arc-C & WinoGrande & 0-shot incr.\\
    & &ppl $\downarrow$ & acc $\uparrow$ & acc $\uparrow$  & acc $\uparrow$  & acc $\uparrow$  & acc $\uparrow$  & acc $\uparrow$ & Mean \% $\uparrow$\\
    \hline
    \multirow{4}{*}{\shortstack[c]{Mamba\\130M}}
& 0 & {\bf 12.9} & {\bf 46.5} & {\bf 35.1} & 64.2 & 48.7 & {\bf 25.5} & 51.7 & --\\
& 1 & 17.8 & 38.1 & 35.0 & 64.2 & 48.6 & 24.9 & {\bf 52.2} & -0.4\\
& 3 & 22.3 & 35.3 & 34.8 & 64.2 & {\bf 50.2} & 24.5 & 50.6 & -0.8\\
& 5 & 23.6 & 35.9 & 34.7 & 64.7 & 49.8 & 24.6 & 50.2 & -0.9\\
    \hdashline
    \multirow{4}{*}{\shortstack[c]{Mamba-2\\130M}}
& 0 & 15.2 & 44.5 & {\bf 35.1} & 64.5 & 47.2 & 24.7 & {\bf 52.2} & --\\
& 1 & 21.9 & 36.1 & 34.5 & 64.3 & 46.8 & 24.0 & 50.8 & -1.7\\
& 3 & 26.9 & 33.3 & 34.7 & {\bf 65.1} & 48.5 & 25.2 & 51.5 & {\bf 0.6}\\
& 5 & 29.0 & 33.8 & 34.5 & 64.8 & 48.7 & 25.1 & 51.3 & 0.4\\
    \hdashline
    \multirow{4}{*}{\shortstack[c]{Pythia\\160M}}
& 0 & 30.2 & 36.1 & 30.0 & 62.2 & 44.7 & 23.6 & 50.3 & --\\
& 1 & 44.5 & 29.1 & 30.4 & 62.0 & 44.0 & 23.6 & 50.5 & -0.0\\
& 3 & 66.7 & 25.5 & 30.3 & 62.8 & 45.2 & 22.8 & 49.8 & -0.3\\
& 5 & 70.4 & 25.3 & 30.5 & 62.9 & 44.1 & 23.4 & 50.8 & 0.3\\
    \hline
    \multirow{4}{*}{\shortstack[c]{Mamba\\370M}}
& 0 & 7.2 & {\bf 56.0} & 46.3 & 69.2 & 55.3 & 27.7 & {\bf 56.0} & --\\
& 1 & 9.3 & 49.9 & 45.7 & 68.7 & 57.1 & 28.3 & 55.4 & 0.5\\
& 3 & 10.4 & 49.4 & 45.7 & 68.9 & 58.7 & {\bf 29.7} & 54.1 & 1.6\\
& 5 & 11.0 & 48.3 & 45.7 & 70.1 & 59.3 & 29.1 & 54.5 & 1.9\\
    \hdashline
    \multirow{4}{*}{\shortstack[c]{Mamba-2\\370M}}
& 0 & {\bf 7.6} & 54.7 & {\bf 46.8} & 69.3 & 52.2 & 27.0 & {\bf 56.0} & --\\
& 1 & 9.9 & 48.3 & 46.0 & 69.6 & 55.7 & 28.8 & 55.2 & 2.1\\
& 3 & 11.8 & 46.3 & 46.3 & 70.1 & 59.0 & 29.1 & 54.5 & 3.6\\
& 5 & 12.6 & 45.5 & 46.3 & {\bf 70.8} & {\bf 59.6} & 29.5 & 53.0 & {\bf 3.8}\\
    \hdashline
    \multirow{4}{*}{\shortstack[c]{Pythia\\410M}}
& 0 & 13.3 & 46.4 & 40.9 & 67.4 & 52.7 & 25.4 & 53.4 & --\\
& 1 & 17.2 & 40.4 & 40.5 & 68.4 & 53.6 & 25.7 & 53.0 & 0.5\\
& 3 & 21.1 & 37.4 & 40.9 & 67.7 & 55.7 & 27.1 & 52.6 & 2.3\\
& 5 & 21.5 & 38.2 & 40.7 & 67.8 & 55.7 & 27.3 & 53.8 & 2.8\\
    \hline
    \multirow{4}{*}{\shortstack[c]{Mamba\\790M}}
& 0 & 5.2 & 62.8 & 55.6 & 72.8 & 62.4 & 30.6 & 56.2 & --\\
& 1 & 6.3 & 56.6 & 54.9 & 72.7 & 64.6 & 31.7 & 56.3 & 1.2\\
& 3 & 7.0 & 55.6 & 54.7 & 72.4 & {\bf 65.3} & 33.2 & 57.5 & 2.7\\
& 5 & 7.5 & 54.6 & 54.9 & 72.9 & 65.6 & 33.8 & 57.2 & 3.2\\
    \hdashline
    \multirow{4}{*}{\shortstack[c]{Mamba-2\\780M}}
& 0 & {\bf 4.9} & {\bf 63.4} & {\bf 55.8} & 71.7 & 61.1 & 30.6 & {\bf 59.2} & --\\
& 1 & 6.6 & 55.2 & 54.4 & 72.7 & 64.2 & 34.0 & 57.6 & 2.5\\
& 3 & 7.8 & 52.7 & 54.9 & {\bf 73.5} & 65.0 & {\bf 34.6} & 57.8 & {\bf 3.6}\\
& 5 & 8.6 & 52.8 & 54.8 & 73.4 & 64.6 & 34.0 & 58.0 & 3.1\\
    \hdashline
    \multirow{4}{*}{\shortstack[c]{Pythia\\1B}}
& 0 & 7.7 & 56.6 & 47.3 & 70.8 & 57.1 & 26.7 & 53.4 & --\\
& 1 & 8.8 & 52.0 & 47.4 & 70.7 & 57.5 & 28.8 & 53.6 & 1.8\\
& 3 & 10.2 & 48.7 & 47.5 & 71.4 & 59.0 & 28.5 & 54.4 & 2.6\\
& 5 & 10.6 & 48.8 & 47.4 & 71.5 & 58.9 & 28.4 & 53.0 & 2.0\\
    \hline
    \multirow{4}{*}{\shortstack[c]{Mamba\\1.4B}}
& 0 & {\bf 4.6} & {\bf 64.8} & 59.3 & 74.3 & 65.2 & 35.1 & 62.3 & --\\
& 1 & 5.4 & 60.3 & 58.2 & 74.3 & 66.7 & 35.7 & {\bf 62.8} & 0.6\\
& 3 & 6.1 & 59.3 & 58.4 & 74.1 & 67.4 & 36.6 & 61.8 & 1.0\\
& 5 & 6.3 & 58.8 & 58.8 & {\bf 74.5} & 68.3 & {\bf 37.0} & 59.9 & 1.1\\
    \hdashline
    \multirow{4}{*}{\shortstack[c]{Mamba-2\\1.3B}}
& 0 & 4.9 & 63.0 & {\bf 60.1} & 73.8 & 64.0 & 34.8 & 61.3 & --\\
& 1 & 6.1 & 58.2 & 59.2 & 74.2 & 67.0 & 35.0 & 60.1 & 0.5\\
& 3 & 7.0 & 56.6 & 59.4 & 73.7 & 67.8 & 36.6 & 59.9 & 1.5\\
& 5 & 7.2 & 56.5 & 59.9 & 73.5 & {\bf 68.5} & 36.7 & 60.7 & 2.2\\
    \hdashline
    \multirow{4}{*}{\shortstack[c]{Pythia\\1.4B}}
& 0 & 5.2 & 63.6 & 52.9 & 71.1 & 61.2 & 30.3 & 58.2 & --\\
& 1 & 6.2 & 57.4 & 52.7 & 71.7 & 62.2 & 30.6 & 56.9 & 0.2\\
& 3 & 7.0 & 56.1 & 53.1 & 71.1 & 64.5 & 32.8 & 56.8 & 2.3\\
& 5 & 7.1 & 55.5 & 53.3 & 71.2 & 63.8 & 33.5 & 57.5 & {\bf 2.9}\\
    \hline  
    \multirow{4}{*}{\shortstack[c]{Mamba\\2.8B}}
& 0 & 4.0 & 67.7 & 66.4 & 75.6 & 68.4 & 36.6 & 64.2 & --\\
& 1 & 4.8 & 63.3 & 65.9 & 76.2 & 70.9 & 39.4 & 64.6 & 2.4\\
& 3 & 5.3 & 62.1 & 65.7 & 75.8 & 71.3 & 39.1 & 65.4 & 2.4\\
& 5 & 5.4 & 61.9 & 66.2 & 77.2 & 71.4 & 40.4 & 66.1 & 3.9\\
    \hdashline
    \multirow{4}{*}{\shortstack[c]{Mamba-2\\2.7B}}
& 0 & {\bf 3.8} & {\bf 68.4} & {\bf 67.5} & 76.0 & 69.5 & 38.3 & 65.3 & --\\
& 1 & 4.5 & 63.8 & 66.7 & 76.0 & 71.8 & 41.5 & {\bf 67.1} & 2.6\\
& 3 & 5.0 & 62.3 & 67.3 & 76.2 & {\bf 73.3} & 44.4 & 66.0 & 4.5\\
& 5 & 5.3 & 61.8 & 67.4 & {\bf 76.4} & 72.4 & {\bf 44.5} & 65.0 & {\bf 4.1}\\
    \hdashline
    \multirow{4}{*}{\shortstack[c]{Pythia\\2.8B}}
& 0 & 5.0 & 64.7 & 59.3 & 74.0 & 64.7 & 33.3 & 59.2 & --\\
& 1 & 5.6 & 60.8 & 59.5 & 74.0 & 66.7 & 34.9 & 59.3 & 1.7\\
& 3 & 6.1 & 59.2 & 59.9 & 75.0 & 67.5 & 34.9 & 60.9 & 2.9\\
& 5 & 6.5 & 59.0 & 60.4 & 74.5 & 67.0 & 35.1 & 61.2 & 3.0\\
    \bottomrule  
  \end{tabular}  
}
\end{table}

\pagebreak
\subsection{Hardware throughput and memory-utilization improvements given PEFT and MPFT}
 With stable dynamics and observed divergences smaller than comparable Transformers, we show that MPFT and PEFT may be used to significantly increase GPU-training throughput for Mamba SSMs.  To demonstrate such improvements, we utilize the previous fine-tuning settings for the Alpaca dataset.  However, we now adjust the batch size to maximize throughput per MPFT and PEFT configuration.

For each MPFT and PEFT configuration, the \emph{average tokens-per-second} (ATPS) is calculated as the total tokens used for fine-tuning divided by total training time, and the \emph{maximum memory-per-token} (MMPT) is calculated as the maximum GPU memory utilization incurred (over the entire fine-tuning run) divided by the total number of tokens in each mini-batch.  Results are plotted in Figure~\ref{fig:timing_memory}.

Both throughput and memory utilization improve as the number of Mamba parameters increases in Figure~\ref{fig:timing_memory}.  {\bf Compared to the full-precision full fine-tuning of \texttt{Mamba 790M}} (the largest model supported by an \texttt{A10G}'s memory capacity), evaluated {\bf MPFT and PEFT combinations result in an average 2.15 times more training tokens-per-second while reducing per-token memory utilization by an average 62.7\%}.  Across all model sizes, evaluated MPFT and PEFT combinations result in an average 1.74 times more training tokens-per-second while reducing per-token memory utilization by an average 47.2\% compared to respective full-precision fine-tuned runs.  Furthermore, while full fine-tuning is no longer possible on a single \texttt{A10G} for Mamba models greater than 790 million parameters, MPFT and PEFT allow training Mamba models up to 2.8 billion parameters on GPUs with as little as 24 GB onboard memory.

\begin{figure*}[htbp!]
  \centering
  \subfigure[Average tokens-per-second $\uparrow$.]{\label{fig:mamba_timing}\includegraphics[trim=0.1in 0.1in 0.0in 0.05in,
      clip=true,scale=0.42]{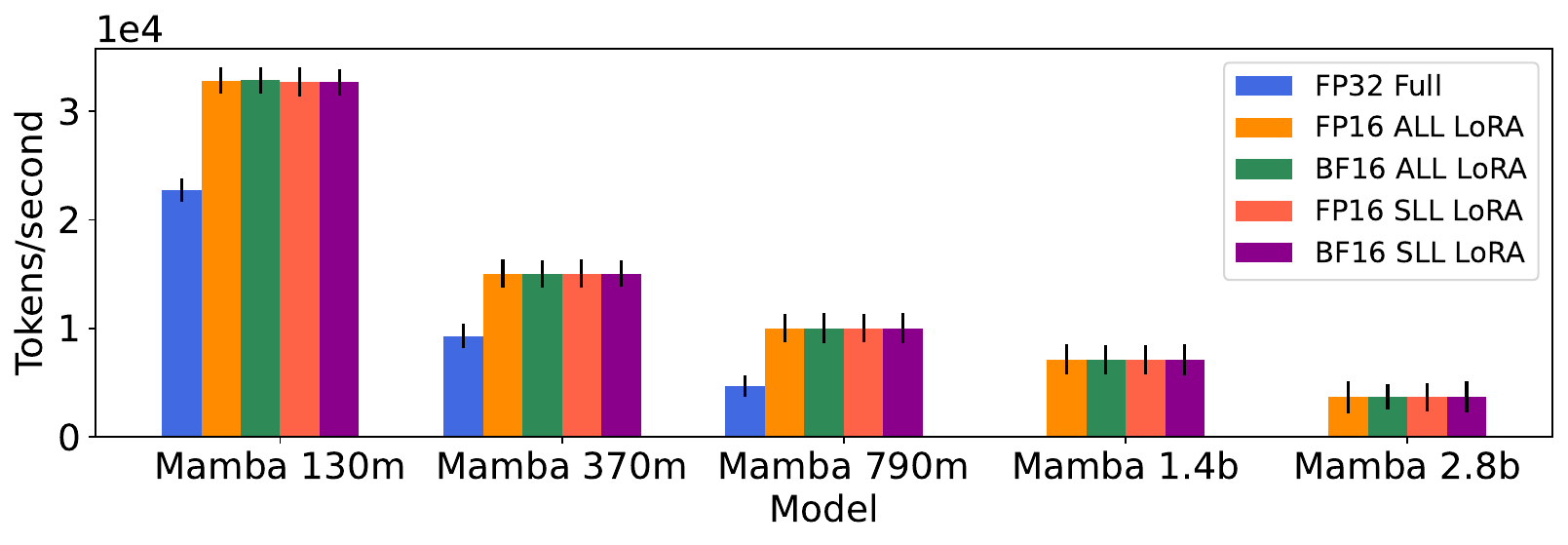}}
  \subfigure[Maximum memory-per-token $\downarrow$.]{\label{fig:mamba_memory}\includegraphics[trim=0.1in 0.1in 0.0in 0.04in,
      clip=true,scale=0.42]{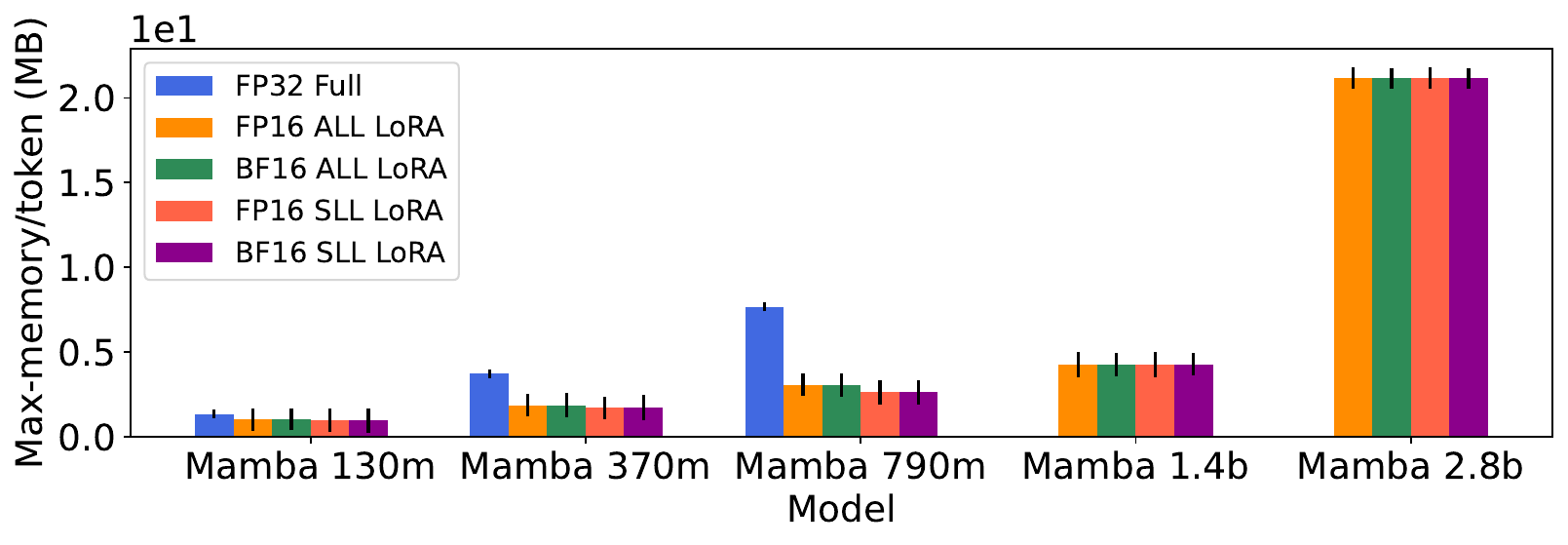}}  
  \caption{Timing and memory usage calculated Mamba model-sizes and PEFT combinations.  Each model was trained using the Alpaca dataset dataset for three epochs and maximum sequence length 512.  For each PEFT combination, the batch size was tuned to maximize GPU occupancy.  Full fine-tuning exceeds available GPU memory (24 GB) for models greater than 790 million parameters.}
  \label{fig:timing_memory}  
\end{figure*}

\section{ICL as an emergent ability of Mamba SSMs}\label{appendix:iclStudy}
We study the emergent behavior (as a function of model size) of Mamba/Mamba-2 SSMs' ICL abilities on natural language tasks by comparing to a larger number of Transformer-based LLMs of varying sizes.  We compare to \texttt{OpenELM}~\citep{mehta2024openelm} (sizes \texttt{270M}, \texttt{450M}, and \texttt{1.1B}), \texttt{TinyLlama 1.1B}~\citep{zhang2024tinyllama}, and \texttt{OLMo 1.2B}~\citep{groeneveld2024olmo}.  To limit the emergent effects on both parameter size and pretraining token counts, we did not evaluate models greater than 2.8 billion parameters and chose open-source checkpoints as close as possible to the 300 billion total pretraining tokens used for Mamba, Mamba-2, and Pythia models.  Thus, pretraining token counts for OpenELM, TinyLlama, and OLMo models were 429 billion, 503 billion, and 336 billion, respectively.  We note this potentially biases ICL performance in favor of the newly evaluated Transformer-based LLMs, and that direct comparisons between Mamba, Mamba-2, and Pythia are the most fair (as these three classes of models were all pretrained on the same dataset for the same number of total pretraining tokens).

We repeat the experiments from Figure~\ref{fig:icl_plots}, where we evaluate the pretrained and instruction tuned ICL capabilities of all models.  To understand the critical role of parameter counts, we group all models into two classes: LLMs containing 450 million parameters or less, and LLMs containing greater than 450 million parameters.  ICL performance measured by AIPSS is displayed in Figure~\ref{class_icl_plots}.

From Figure~\ref{class_icl_plots}, it is clear that pretrained SSMs and Transformers of parameter counts 270 million and less display slight or detrimental ICL abilities (i.e., few-shot performance is worse than zero-shot).  For models of greater than 450 million parameters, the majority of SSMs and Transformers display positive ICL abilities, with \texttt{Mamba 1.4B} being an outlier in terms of poor performance.  With the exception of TinyLlama at 1-shot performance and \texttt{Mamba-2 2.7B} for 3- and 5-shot performance, the majority of other pretrained models cluster together.

Instruction tuning greatly smooths ICL performance across both parameter classes.  While instruction tuned SSMs and Transformers of 160 million parameters or fewer continue to display slight or detrimental ICL abilities, all parameters of 270 million and greater show positive ICL abilities. For instruction tuned models of greater than 450 million parameters, all SSMs and Transformers show positive ICL abilities, with \texttt{Mamba-2} \texttt{2.7B} greatly outperforming all other models (both SSM and Transformer) in this class.

Thus, in terms of ICL as a function of SSM model size, while no clear trend presents itself for pretrained models, ICL emerges for instruction tuned Mamba and Mamba-2 SSMs of size 370 million and greater.

\begin{figure*}[htbp!]
  \centering
  \subfigure[{\small $ 450\text{M} < \text{Parameter total}$, Pretrained}]{\label{fig:midclass_pretrain_icl}\includegraphics[trim=0.0in 0.25in 0.2in 0.1in,
      clip=true,scale=0.33]{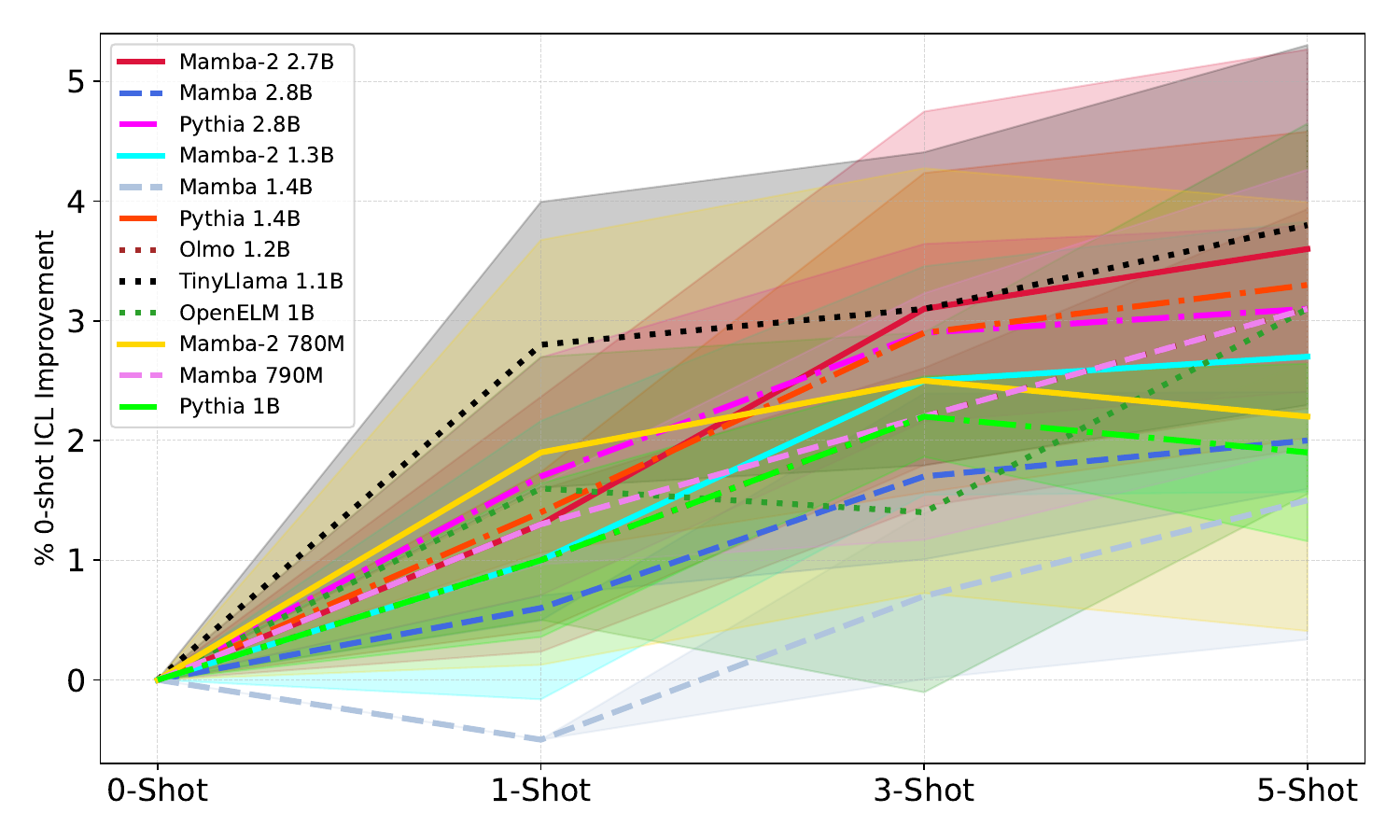}}
  \subfigure[{\small $ 450\text{M} < \text{Parameter total}$, \all{}}]{\label{fig:midclass_sft_icl}\includegraphics[trim=0.4in 0.25in 0.2in 0.1in,
      clip=true,scale=0.33]{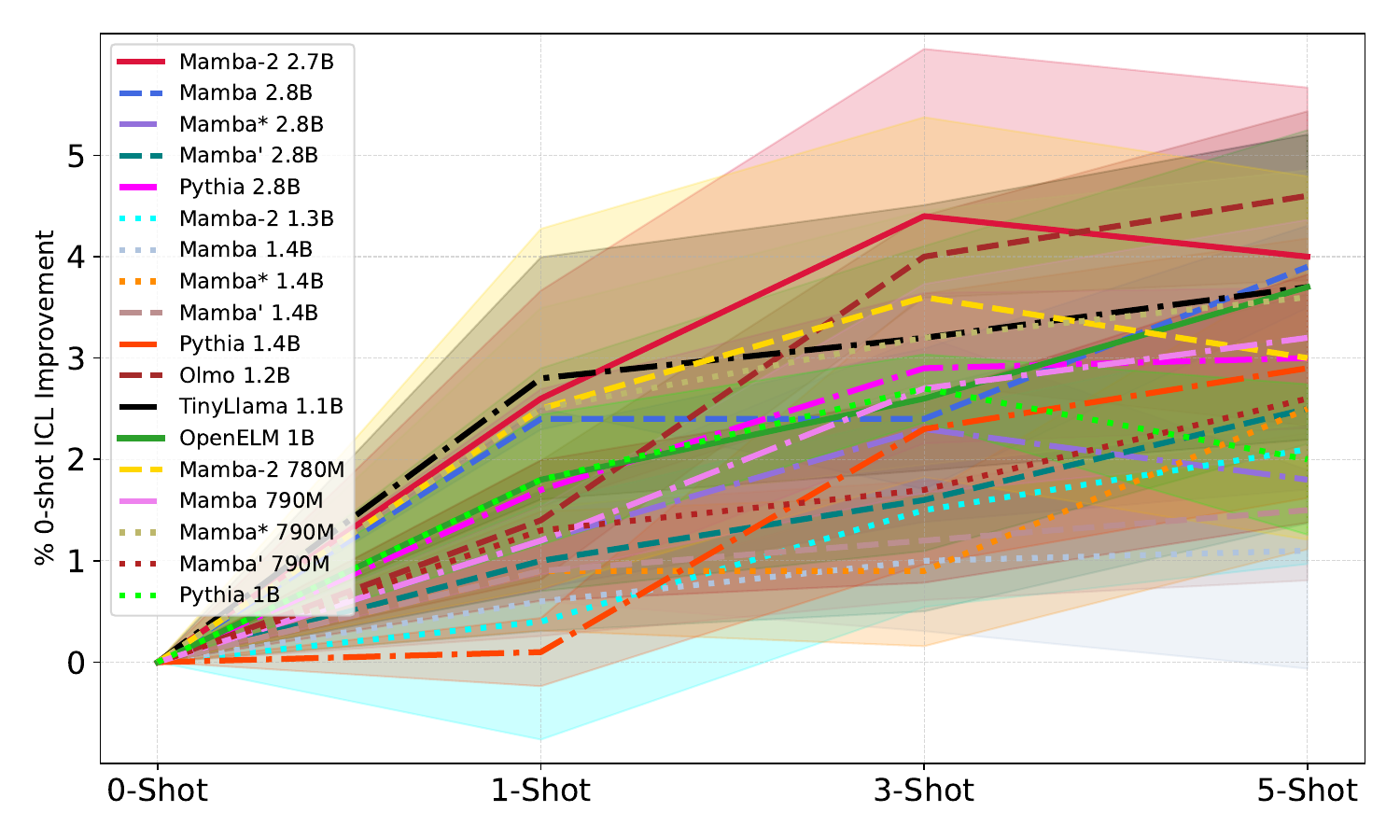}}    
  \subfigure[{\small $\text{Parameter total } \leq 450 \text{M}$, Pretrained}]{\label{fig:lowclass_pretrain_icl}\includegraphics[trim=0.0in 0.25in 0.2in 0.1in,
      clip=true,scale=0.33]{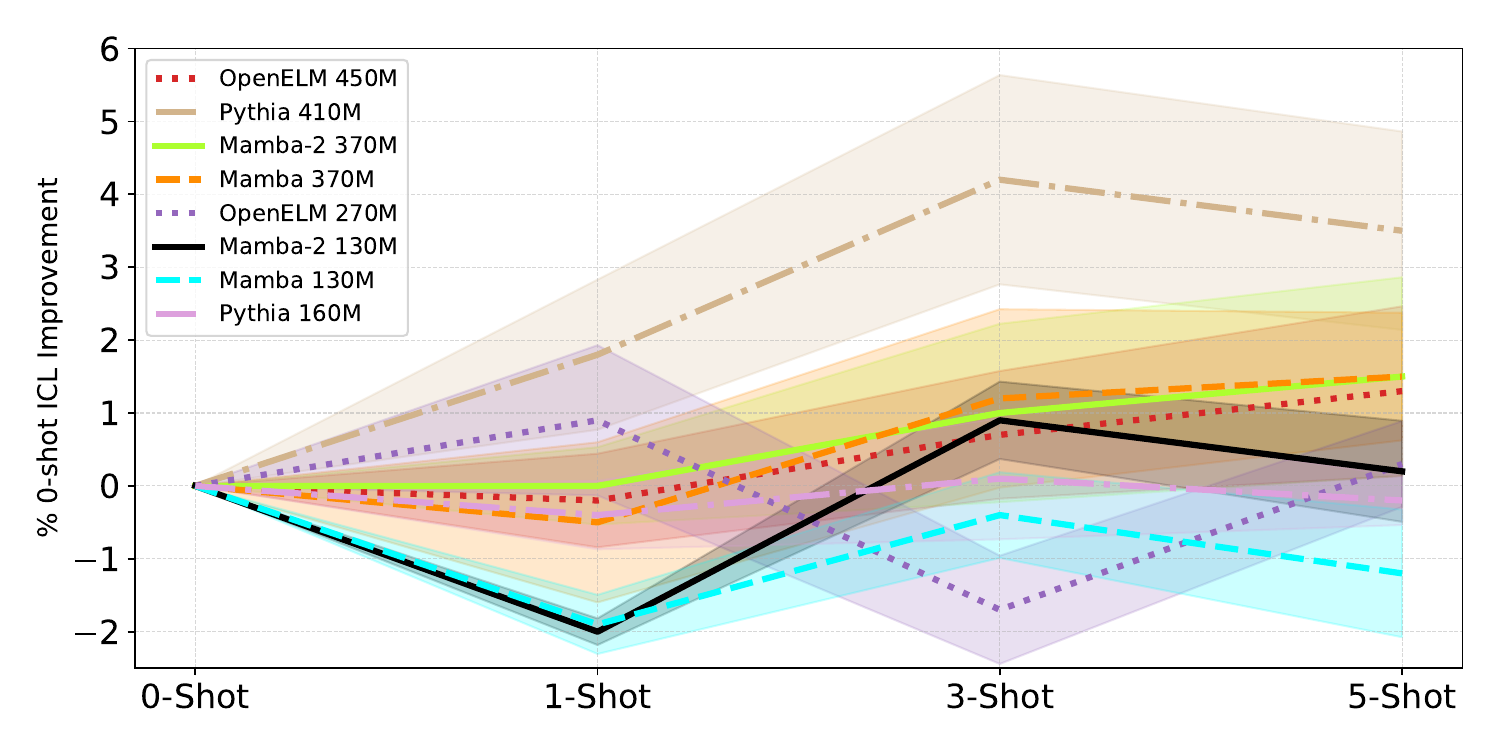}}
  \subfigure[{\small $\text{Parameter total } \leq 450 \text{M}$, \all{}}]{\label{fig:lowclass_sft_icl}\includegraphics[trim=0.4in 0.25in 0.2in 0.1in,
      clip=true,scale=0.33]{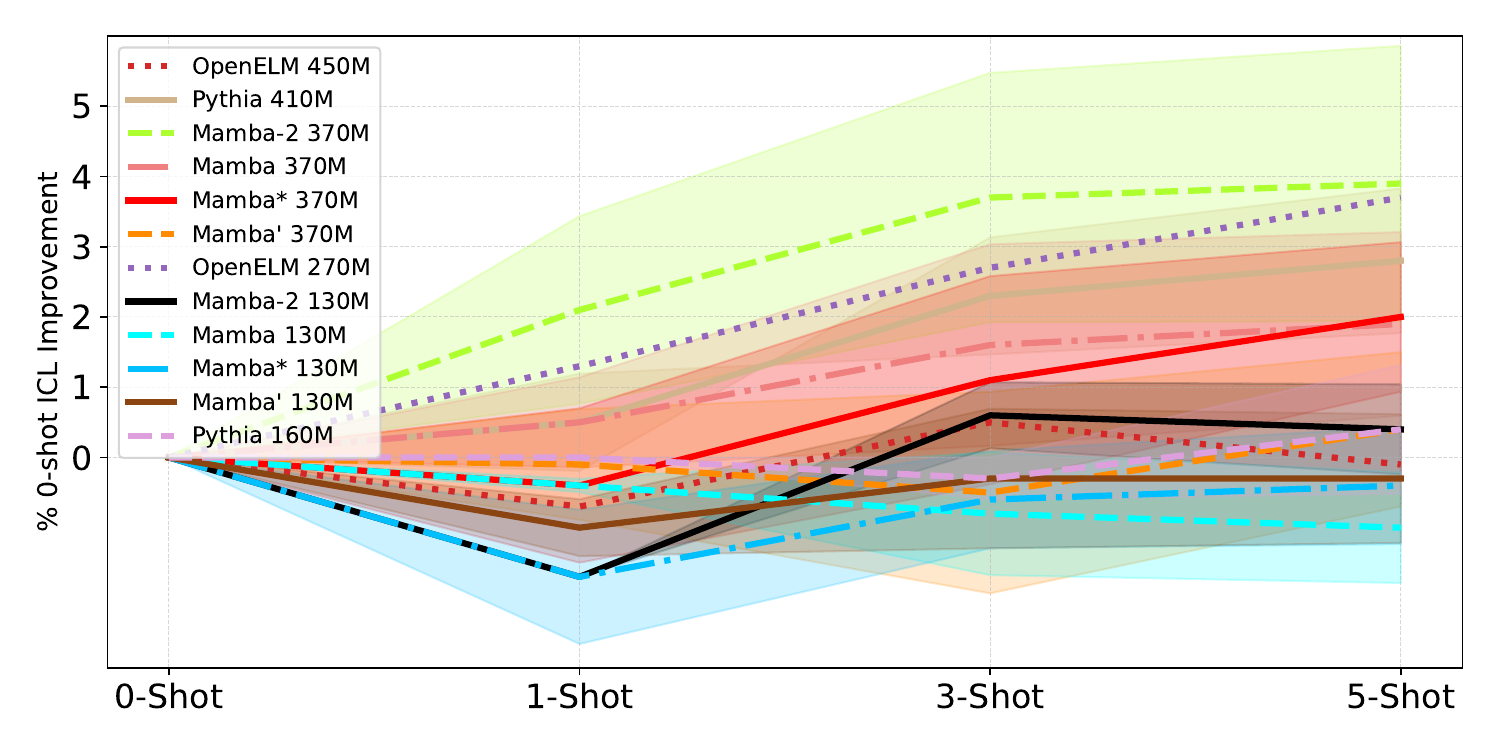}}  
  \caption{Instruction tuning improves Mamba-2 ICL performance past Transformer LLMs.  \all{} models were instruction fine-tuned on the OpenHermes dataset for one epoch.  Performance is reported as the average improvement percentage of \{1, 3, 5\}-shot versus 0-shot over five standard natural language benchmarks: HellaSwag, PIQA, Arc-E, Arc-C, and WinoGrande.  Mamba* and Mamba' refer to recent Mamba-1 PEFT methods, i.e., $\text{LoRA}_{p}(X)$ and Additional-Scan, respectively, from \citet{yoshimuramambapeft}.}
  \label{class_icl_plots}
\end{figure*}

\section{Expanded divergence results: Alpaca and LIMA fine-tuning, MMLU and Winogrande benchmarks, Mean and Standard Deviation divergences}\label{appendix:divergences}
We extend the non-divergent Mamba fine-tuning results from Section~\ref{section:experiments}.  Recall that the following MPFT and PEFT configurations are considered to fine-tune each considered LLM:
\begin{enumerate}
\item Full fine-tuning in \fullp{}
\item Full fine-tuning in \halfp{}
\item Full fine-tuning in \bhalfp{}
\item \all fine-tuning in \fullp{}
\item \all fine-tuning in \halfp{}
\item \all fine-tuning in \bhalfp{}
\item \sll fine-tuning in \fullp{}
\item \sll fine-tuning in \halfp{}
\item \sll fine-tuning in \bhalfp{}
\end{enumerate}
In addition to the \texttt{Alpaca} dataset~\citep{alpaca}, we also fine-tune all models using the \texttt{LIMA} dataset~\citep{zhou2024lima}.  Models are trained using \texttt{LIMA} for 5 epochs, while all other settings follow the fine-tuning recipe used for \texttt{Alpaca} (described in Appendix~\ref{appendix:experimentalDetails}).

For natural language benchmarks, in addition to MMLU, we evaluate each fine-tuned model using Winogrande~\citep{sakaguchi2021winogrande}.  Recall that, for each benchmark, divergence between a mixed-precision fine-tuned model is measured between its full-precision counterpart and averaged over \{0, 1, 3, 5\}-shot performance.  In addition to the average divergence, we also include the standard deviation of divergence.  Thus, {\bf in total, 144 LLMs were fine-tuned, 576 MMLU evaluations were conducted, and 576 Winogrande evaluations were conducted.}

Figure~\ref{fig:alpaca_ft_mmlu_appendix} displays results for \texttt{Alpaca} and MMLU, Figure~\ref{fig:alpaca_ft_winogrande_appendix} displays results for \texttt{Alpaca} and Winogrande, Figure~\ref{fig:lima_ft_mmlu_appendix} displays results for \texttt{LIMA} and MMLU, Figure~\ref{fig:lima_ft_winogrande_appendix} displays results for \texttt{LIMA} and Winogrande.  Summary statistics for all experiments are presented in Table~\ref{table:divergence_summary}.  While OpenELM exhibits large deviation spikes for both \texttt{Alpaca} benchmark evaluations--and Pythia exhibits large deviation spikes for all four evaluations--{\bf Mamba does not exhibit a single large deviation spike on any benchmark for all considered model sizes and MPFT/PEFT configurations} (i.e., 18 total configurations excluding the full-precision baselines).  Furthermore, Mamba models are significantly more stable for MPFT/PEFT compared to Transformer-based LLMs.  E.g., {\bf for MMLU evaluations, \texttt{Alpaca} fine-tuning with Mamba models is an average 2.6 times smaller in mean divergence than both Pythia and OpenELM models, while \texttt{LIMA} fine-tuning with Mamba models is an average 7 and 3.25 times smaller in mean divergence than Pythia and OpenELM models, respectively}.  

\begin{table}[htbp!]
  \caption{Summary of divergence results for \texttt{Alpaca} and \texttt{LIMA} fine-tuning datasets, MMLU and Winogrande benchmarks, and Mamba, OpenELM, and Pythia models.  For each deviation summary statistic per fine-tuning dataset and benchmark, the lowest deviation is highlighted in bold.}
  \label{table:divergence_summary}
  \centering
\tabcolsep=0.11cm  
{\small
  \begin{tabular}{ccccc}
    \toprule
    (Fine-tuning dataset), Benchmark & Architecture & Large deviation spikes $\downarrow$ & Avg mean divergence $\downarrow$ & Std mean divergence $\downarrow$\\
    \hline
    \multirow{3}{*}{\shortstack[c]{(\texttt{Alpaca}, MMLU})}
    & Pythia & 1 & 0.37 & 0.41\\
    & OpenELM & 1 & 0.37 & 0.32\\
    & Mamba & {\bf 0} & {\bf 0.14} & {\bf 0.08}\\
    \hline
    \multirow{3}{*}{\shortstack[c]{(\texttt{Alpaca}, Winogrande})}
    & Pythia & 4 & 0.72 & 0.58\\
    & OpenELM & 3 & 0.59 & 0.37\\
    & Mamba & {\bf 0} & {\bf 0.25} & {\bf 0.09}\\
    \hline
    \multirow{3}{*}{\shortstack[c]{(\texttt{LIMA}, MMLU})}
    & Pythia & 1 & 0.28 & 0.34\\
    & OpenELM & {\bf 0} & 0.13 & 0.15\\
    & Mamba & {\bf 0} & {\bf 0.04} & {\bf 0.03}\\
    \hline
    \multirow{3}{*}{\shortstack[c]{(\texttt{LIMA}, Winogrande})}
    & Pythia & 3 & 0.45 & 0.45\\
    & OpenELM & {\bf 0} & 0.36 & 0.18\\
    & Mamba & {\bf 0} & {\bf 0.11} & {\bf 0.12}\\
    \bottomrule  
  \end{tabular}  
}
\end{table}

\begin{figure*}[htbp!]
  \centering
  \subfigure[Mean full-precision (\fullp{}) divergence in MMLU performance.]{\includegraphics[trim=0.11in 0.14in 0.05in 0.10in, clip=true,scale=0.345]{arxiv_figs/alpaca_finetuning_precisions}}
  {\small\caption{{\bf Alpaca fine-tuning, MMLU evaluation.}   Mamba, Pythia, and OpenELM models are fine-tuned over the \texttt{Alpaca} dataset using different combinations of MPFT and PEFT.  Full fine-tuning (i.e., no PEFT adapters) is denoted as \texttt{Full}.}}
  \label{fig:alpaca_ft_mmlu_appendix}
\end{figure*}

\begin{figure*}[htbp!]
  \centering
  \subfigure[Mean full-precision (\fullp{}) divergence in Winogrande performance.]{\includegraphics[trim=0.11in 0.14in 0.05in 0.10in, clip=true,scale=0.345]{arxiv_figs/alpaca_winogrande_finetuning_precisions}}
  {\small\caption{{\bf Alpaca fine-tuning, Winogrande evaluation.}   Mamba, Pythia, and OpenELM models are fine-tuned over the \texttt{Alpaca} dataset using different combinations of MPFT and PEFT.  Full fine-tuning (i.e., no PEFT adapters) is denoted as \texttt{Full}.}}
  \label{fig:alpaca_ft_winogrande_appendix}
\end{figure*}

\begin{figure*}[htbp!]
  \centering
  \subfigure[Mean full-precision (\fullp{}) divergence in MMLU performance.]{\includegraphics[trim=0.11in 0.14in 0.05in 0.10in, clip=true,scale=0.345]{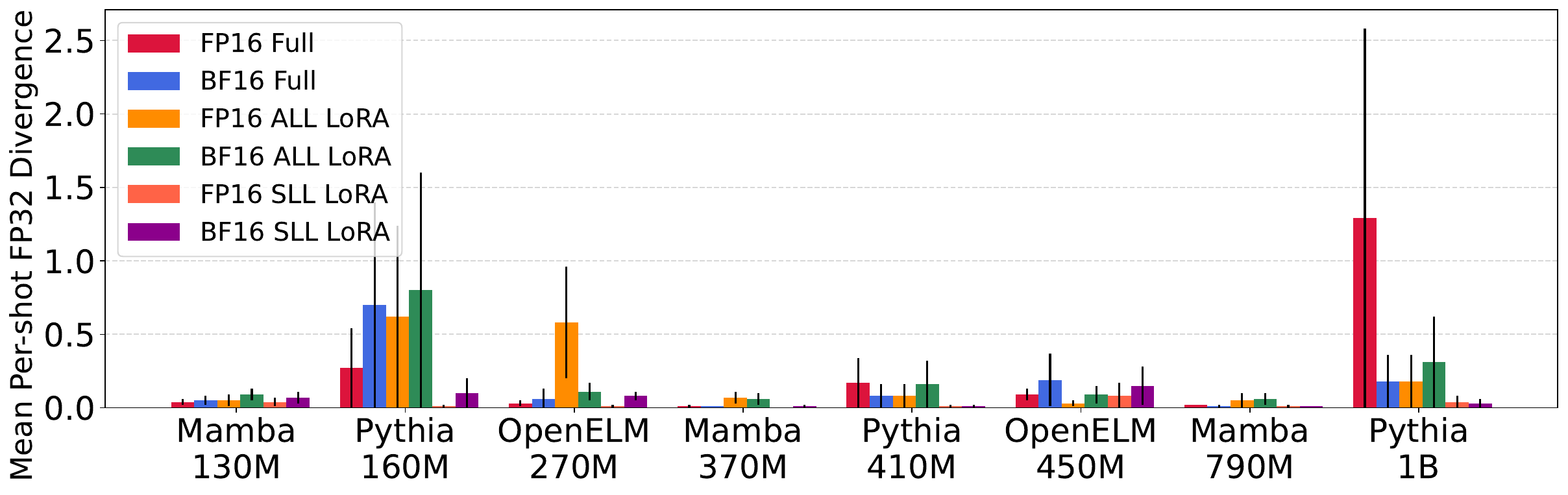}}
  {\small\caption{{\bf LIMA fine-tuning, MMLU evaluation.}   Mamba, Pythia, and OpenELM models are fine-tuned over the \texttt{LIMA} dataset using different combinations of MPFT and PEFT.  Full fine-tuning (i.e., no PEFT adapters) is denoted as \texttt{Full}.}}
  \label{fig:lima_ft_mmlu_appendix}
\end{figure*}

\begin{figure*}[htbp!]
  \centering
  \subfigure[Mean full-precision (\fullp{}) divergence in Winogrande performance.]{\includegraphics[trim=0.11in 0.14in 0.05in 0.10in, clip=true,scale=0.345]{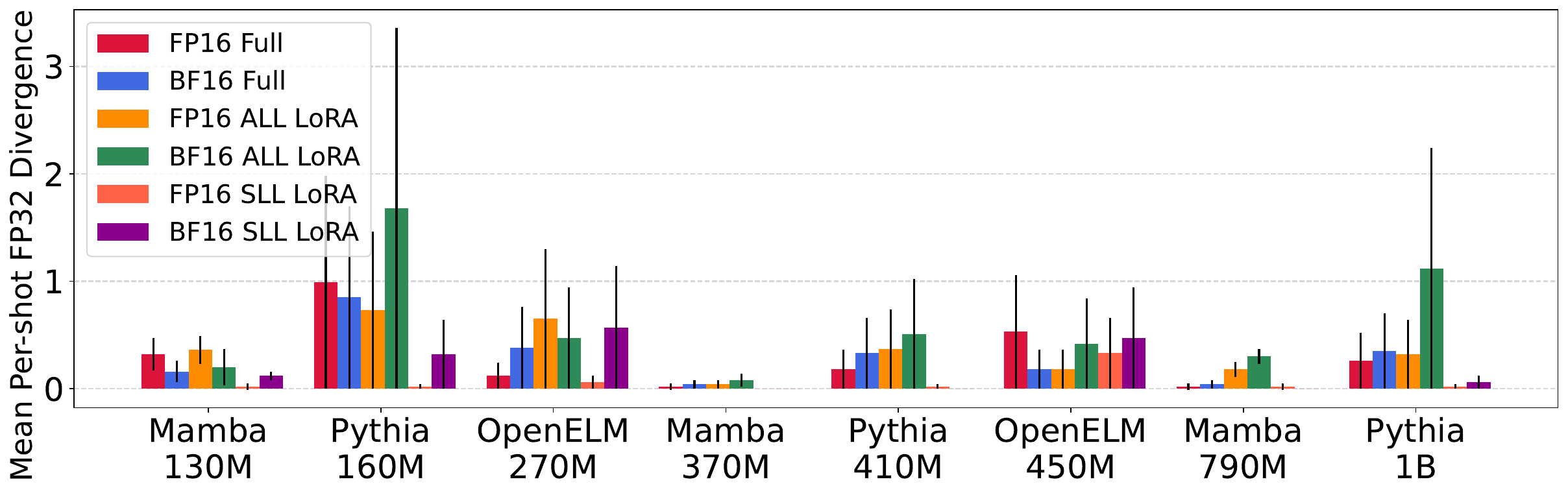}}
  {\small\caption{{\bf LIMA fine-tuning, Winogrande evaluation.}   Mamba, Pythia, and OpenELM models are fine-tuned over the \texttt{LIMA} dataset using different combinations of MPFT and PEFT.  Full fine-tuning (i.e., no PEFT adapters) is denoted as \texttt{Full}.}}
  \label{fig:lima_ft_winogrande_appendix}
\end{figure*}

\end{document}